\newcommand{\cmark}{\ding{51}}%
\newcommand{\edit}[1]{\textcolor{black}{#1}}
\crefname{section}{Sec.}{Secs.}
\Crefname{section}{Section}{Sections}
\Crefname{table}{Table}{Tables}
\crefname{table}{Tab.}{Tabs.}
\begin{document}

\title{Bidirectional Multi-Step Domain Generalization\\ for Visible-Infrared Person Re-Identification} 
\author{\textsuperscript{1}Mahdi Alehdaghi, \textsuperscript{2}Pourya Shamsolmoali, \textsuperscript{1}Rafael M. O. Cruz, and \textsuperscript{1}Eric Granger\\
\textsuperscript{1}LIVIA, ILLS, Dept. of Systems Engineering, ETS Montreal, Canada\\
\textsuperscript{2}Dept. of Computer Science, University of York, UK \\
{\tt\small mahdi.alehdaghi.1@ens.etsmtl.ca,} 
{\tt\small pshams55@gmail.com,} \\
{\tt\small \{rafael.menelau-cruz, eric.granger\}@etsmtl.ca}
}

\maketitle
\vspace{-1cm}

\begin{abstract}
\vspace{-0.3cm}
A key challenge in visible-infrared person re-identification (V-I ReID) is training a backbone model capable of effectively addressing the significant discrepancies across modalities. State-of-the-art methods that generate a single intermediate bridging domain are often less effective, as this generated domain may not adequately capture sufficient common discriminant information.
This paper introduces Bidirectional Multi-step Domain Generalization (BMDG), a novel approach for unifying feature representations across diverse modalities. BMDG creates multiple virtual intermediate domains by learning and aligning body part features extracted from both I and V modalities. In particular, our method aims to minimize the cross-modal gap in two steps.  
First, BMDG aligns modalities in the feature space by learning shared and modality-invariant body part prototypes from V and I images. Then, it generalizes the feature representation by applying bidirectional multi-step learning, which progressively refines feature representations in each step and incorporates more prototypes from both modalities. 
Based on these prototypes, multiple bridging steps enhance the feature representation.
Experiments\footnote{Our code is available at: \href{https:/alehdaghi.github.io/BMDG/}{alehdaghi.github.io/BMDG}} conducted on V-I ReID datasets indicate that our BMDG approach can outperform state-of-the-art part-based and intermediate generation methods, and can be integrated into other part-based methods to enhance their V-I ReID performance. 
\end{abstract}

 \vspace{-0.5cm}
\section{Introduction} \label{sec:intro}
\vspace{-0.25cm}



V-I ReID is a variant of person ReID that involves matching individuals across RGB and IR cameras. V-I ReID is challenging since it requires matching individuals with significant differences in appearance between V and I modality images. In this context, V-I ReID systems must train a discriminant feature extraction backbone to encode consistent and identifiable attributes in RGB and IR images.
Most state-of-the-art methods seek to learn global representations from the whole image by alignment at the image-level \cite{Wang_2019_ICCV_AlignGAN, kniaz2018thermalgan} or feature-level \cite{all-survey, Bi-Di_Center-Constrained, hetero-center, park2021learning} (see Fig. \ref{fig:short}(a)). Other methods extract global modality-invariant features by disentangling them from modality-specific information \cite{HI-CMD, paired-images2}. \edit{However, global feature-based approaches cannot compare local discriminative attributes, resulting in the loss of important discriminant cues about the person.}
To address this issue and focus on the unique information in different body regions, part-based approaches (see Fig. \ref{fig:short}(b)) extract local fine-grained feature representations through horizontal stripes, clustering, or attention mechanisms of spatially extracted features \cite{DDAG,cmSSFT,part1,wu2021Nuances}. 
However, given the domain discrepancies between the V and I modalities and the lack of matching cross-modal parts during training, these methods often learn modality-specific attributes for each part. This typically results in feature representations that are less modality-invariant, and thus provide limited effectiveness for cross-modal matching tasks.

\begin{figure*} [!t]
  \centering
  \begin{subfigure}{0.19\linewidth}
    \includegraphics[width=\linewidth]{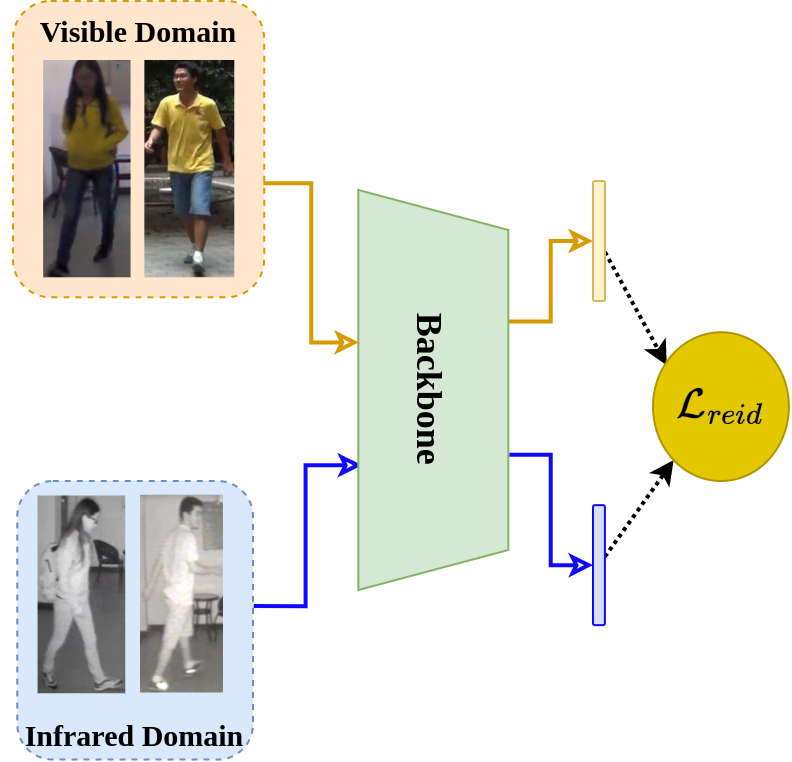}
    \caption{Global representation.}
    \label{fig:short-a}
  \end{subfigure}
  \hfill
  \begin{subfigure}{0.19\linewidth}
    \includegraphics[width=\linewidth]{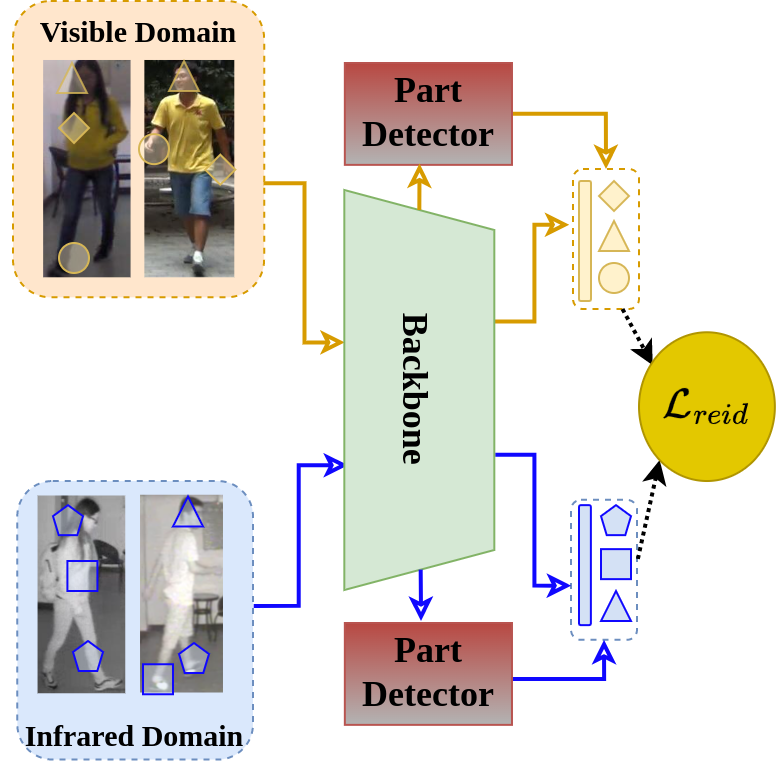}
    \caption{Part-based representation.}
    \label{fig:short-b}
  \end{subfigure}
  \hfill
  \begin{subfigure}{0.19\linewidth}
    \includegraphics[width=\linewidth]{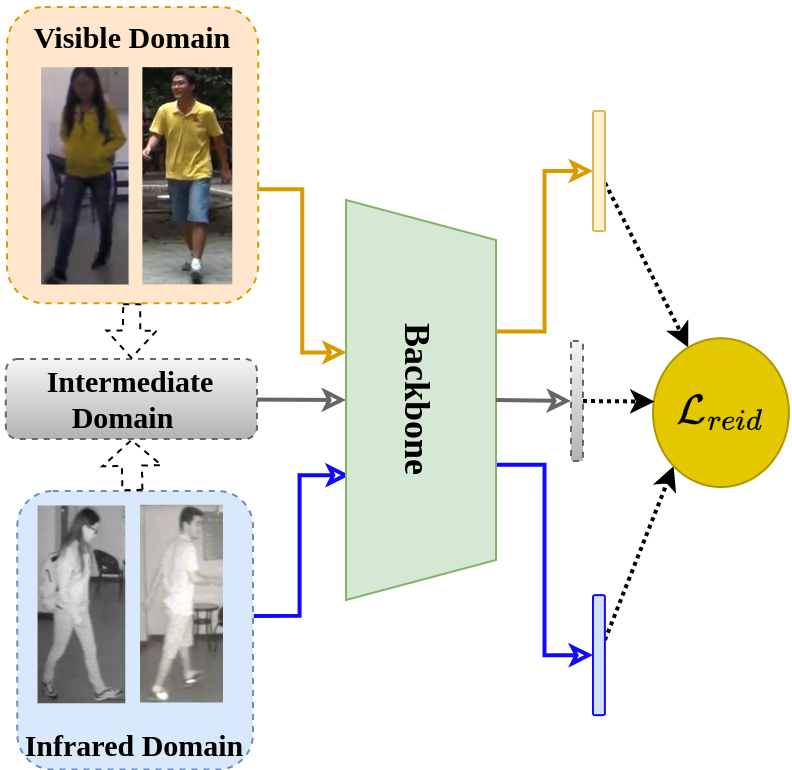}
    \caption{Intermediate modality.}
    \label{fig:short-c}
  \end{subfigure}
  \begin{subfigure}{0.35\linewidth}
    \includegraphics[width=\linewidth]{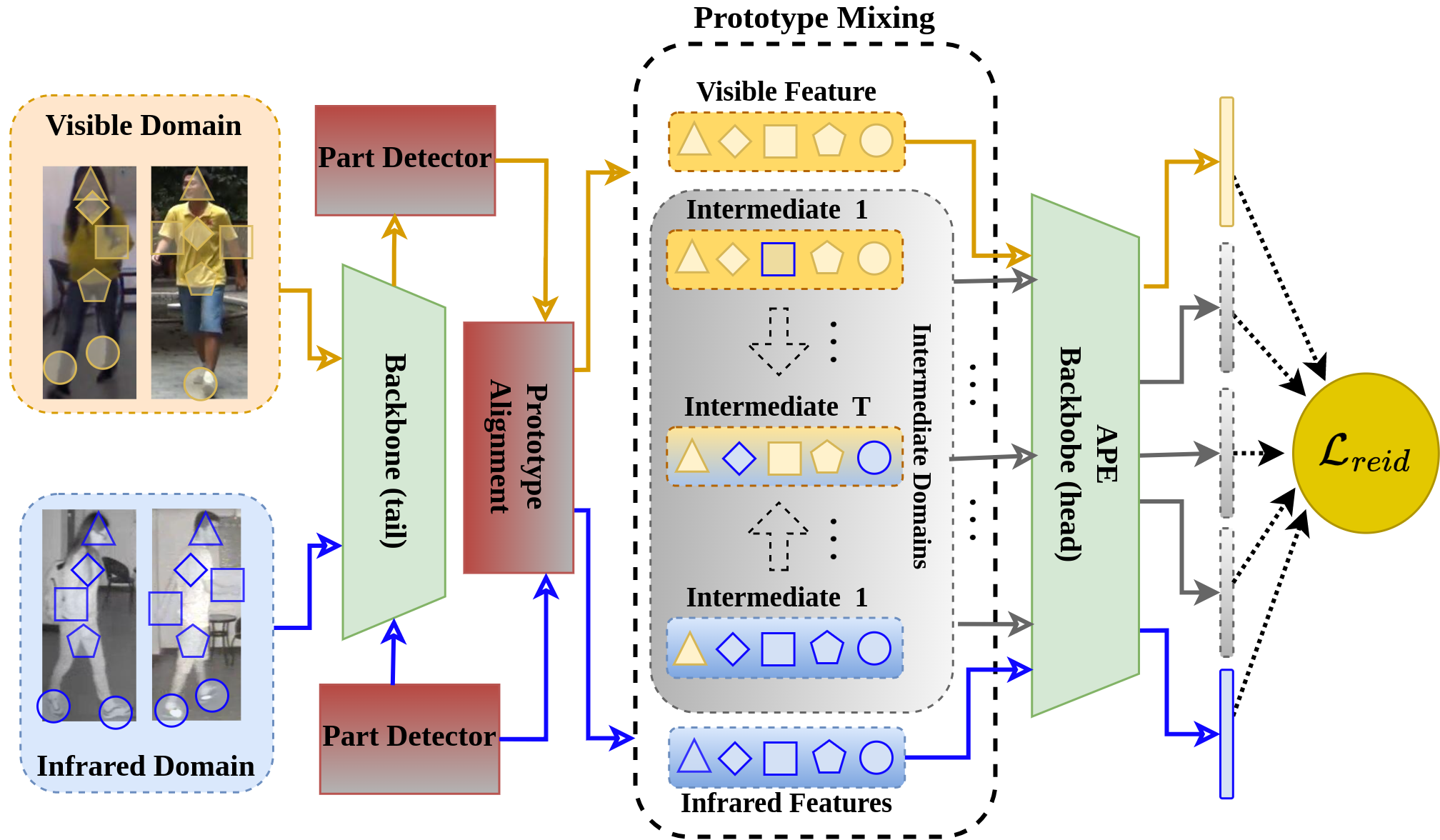}
    \caption{Our bidirectional multi-step domain generalization.}
    \label{fig:short-d}
  \end{subfigure}
  \vspace{-.15cm}
  \caption{A comparison of training architectures for V-I ReID. Approaches based on (a) a global features representation, and (b) a local part-based representation to preserve locality and global features. (c) Generation of an intermediate bridging domain to guide training. (d) Our BMGD extracts and combines prototypes from modalities in each step to gradually create multiple intermediate bridging domains.}
  \label{fig:short}
  \vspace{-.25cm}
\end{figure*}

Some methods leverage an intermediate modality to guide the training process such that the cross-modal domain gap is reduced \cite{xModal,HAT,randLUPI,wu2017rgb,zhang2021towards} (see Fig. \ref{fig:short}(c)). Intermediate modalities may be static images (e.g., gray-scale) \cite{HAT,randLUPI} or created images from a generative module learned by I and V modalities \cite{xModal,wu2017rgb,zhang2021towards}. The resulting intermediate representations can leverage shared relations between I and V images and improve robustness to modality changes. However, they generate images that tend to contain shared non-discriminative information, resulting in the loss of ID-aware information in the intermediate domain. Moreover, one-step training strategies only create one intermediate space between V and I. This cannot provide sufficient bridging information when modalities have significant discrepancies. Single-step approaches cannot effectively deal with these discrepancies during training. Therefore, we advocate for linking a series of intermediate domains with progressively smaller gaps.  

\edit{
To effectively bridge the large cross-modal domain gap while preserving local ID-discriminative information, we introduce Bidirectional Multi-step Domain Generalization (BMDG), a gradual, multi-step, part-based training strategy for person V-I ReID (see Fig. \ref{fig:short}(d)).  
At first, this strategy identifies similar ID-aware part prototypes\footnote{We define a {\it part prototype} as a feature representation corresponding to a specific body part in a cropped image of a person.} for each modality, and then it progressively creates intermediate steps by controlling the number of the part prototypes to be mixed. Generated images can become degraded, as discriminative details are lost over consecutive generation steps, making these images less useful for training. Therefore, our proposed BMDG creates multiple intermediate domains in the feature space by aligning, and then combining ID-informative subsets of features from both modalities.
}

Our BMGD training strategy is comprised of two parts. The prototype alignment module learns to identify and align part-prototype representations that contain discriminative information linked to body parts in I and V images. Then, the bidirectional multi-step learning progressively extracts auxiliary intermediate feature representations by mixing prototypes from both modalities at each step. For ID-aware and robust alignment between modalities, the extracted part-prototypes must meet three conditions: (1) they should be \textit{complementary} to other parts, (2) they should be \textit{interchangeable} and represent consistent parts, and (3) they should be \textit{discriminant} for person matching. Hierarchical contrastive learning is proposed to train the prototype alignment module such that these properties are respected without affecting ReID accuracy. Our novel idea is to progressively create intermediate bridging domains by mixing a growing number of body part prototypes extracted from I and V modalities. This allows leveraging common semantic information between modalities, along with discriminative information among individuals. 


Bidirectional multi-step learning relies on auxiliary intermediate feature spaces that are created at each step, by progressively mixing semantically consistent prototypes from each modality. While prototype-based approaches \cite{DDAG,liu2021sfanet,SAAI,ISP} concentrate solely on enhancing discriminatory representation using local descriptors, BMDG aligns and mixes local part information in a gradual training process to mitigate domain discrepancies within the representation space. Unlike the Part-Mix method \cite{PartMix23}, which utilizes a fixed proportion of mixed parts for auxiliary features, our approach employs bidirectional multi-step generation for multiple intermediate steps. This bolsters the robustness of V-I ReID training for samples with large discrepancies, especially at the beginning of training when the parts are inconsistent. Our strategy enables effective bridging of cross-modality gaps and allows gradual training by adjusting the proportion of information to be exchanged. We also propose a hierarchical prototype learning strategy, which is crucial to defining meaningful bridging steps. 

\noindent \textbf{Our main contributions are summarized as follows}.\\ 
\textbf{(1)} A BMDG method is proposed to train a backbone V-I ReID model that learns a common feature embedding by gradually reducing the gap between I and V modalities through multiple auxiliary intermediate steps.\\  
\textbf{(2)} A novel hierarchical prototype learning module is introduced to align discriminative and interchangeable part prototypes between modalities using two-level contrastive learning. These prototypes maintain semantic consistency to facilitate the exchange of V and I information, thus enabling the generation of intermediate bridging domains.\\  
\textbf{(3)} A bidirectional multi-step learning model is proposed to progressively combine prototypes from both modalities and create intermediate feature spaces at each step. It integrates attentive prototype embedding (APE), which refines these prototypes to extract attributes that are shared between modalities, thereby improving ReID accuracy. 
Indeed, BMDG can find better modality-shared attributes by optimizing the model on two levels: prototype alignment and bidirectional multi-step embedding modules.\\
\textbf{(4)} Extensive experiments on the challenging SYSU-MM01 \cite{SYSU}, RegDB \cite{regDB} and LLCM\cite{LLCM} datasets indicate that BMDG outperforms state-of-the-art methods for V-I person ReID. They also show that our framework can be integrated into any part-based V-I ReID method and can improve the performance of cross-modal retrieval applications.

 \vspace{-.15cm}
\section{Related Work}
\vspace{-.15cm}
In V-I ReID, deep backbone models are trained with labeled V and I images captured using RGB and IR cameras. During inference, cropped images of people are matched across camera modalities. That is, query I images are matched against V gallery images, or vice versa \cite{chen2021neural, cm-gan,fu2021cm, adv-modal, park2021learning, xu2021cross, HCML, hetero-center}. 
State-of-the-art methods can be classified into approaches for global or part-based representation, or that leverage an intermediate modality.

\noindent \textbf{(a) Global and Part-Based Representation Methods:} 
Representation approaches focus on training a backbone model to learn a discriminant representation that captures the essential features of each modality while exploiting their shared information \cite{HCML,cm-gan,Bi-Di_Center-Constrained,DZP,EDFL:journals/corr/abs-1907-09659,all-survey}. In \cite{all-survey,HAT,CAJ,park2021learning,JFLN}, authors only use global features through multimodal fusion. The absence of local information limits their accuracy in more challenging cases. To improve robustness, recent methods combine global information with local part-base features \cite{DDAG,cmSSFT,wu2021Nuances,SAAI,PartMix23}. For example, \cite{DDAG,cmSSFT} divides the spatial feature map into fixed horizontal sections and applies a weighted-part aggregation.  
\cite{SAAI,wu2021Nuances, CATA, ISP} dynamically detects regions in the spatial feature map to alleviate the misalignment of fixed horizontally divided body parts. 

\edit{
However, these part detectors might over-focus on specific parts, varying based on the modality or person. \cite{PartMix23} addresses this issue using Part-Mix data augmentation to regularize model part discovery. Additionally, parts may not correspond to semantically similar regions across different images. To tackle this, a bipartite graph-based correlation model between part regions has been proposed to maximize similarities \cite{CATA}. While part-based methods enhance representation ability, they often overlook shared information between similar semantic parts across different individuals or modalities, leading to inconsistencies in extracted part features. This hinders effective comparison or exchange. 
}
\edit{
Similar to the human perception system, distinguishing individuals by comparing attributes of similar body parts (e.g., head, torso, leg) learned from a diverse set of individuals, our approach seeks to capture and leverage this shared information for accurate matching. To achieve this, BMDG discovers different body parts as prototypes, disregarding the person's identity using hierarchical contrastive learning, and then leverages their ID-discriminative attributes for each part through part-based cross-entropy loss.}

\noindent \textbf{(b) Methods Based on an Intermediate Modality:} 
In recent years, V-I ReID methods have relied on an intermediate modality to address the significant gap between V and I modalities \cite{Fan2020CrossSpectrumDP, HAT, randLUPI, xModal,wu2017rgb,zhang2021towards,shape-Erase23,alehdaghi2023adaptive}. Using fixed intermediate modality like grayscale \cite{Fan2020CrossSpectrumDP,HAT} or random channels \cite{CAJ,randLUPI} can improve accuracy. \cite{xModal} transform V images into a new modality to reduce the gap with I images. \cite{wu2017rgb} conducted a pixel-to-pixel feature fusion operation on V and I images to build the synthetic images, and \cite{zhang2021towards} used a shallow auto-encoder to generate intermediate images from both modalities. While these generated images bridge the cross-modal gap, details of the image are lost. Accuracy is degraded in the context of multi-step learning. To address this, we extract body part prototypes and gradually use them to define multiple intermediate feature spaces.

\noindent \textbf{(c) Domain Generalization and Adaptation:} 
Data augmentation is an effective strategy to learn domain-invariant features and improve model generalization~\cite{DG_survey}.  
DG methods focus on manipulating the inputs to assist in learning general representations. \cite{zhou2020deep} relies on domain information to create additive noise that increases the diversity of training data while preserving its semantic information. \cite{mixup, DGzhou2021domain} uses Mixup to increase data diversity by blending examples from the different training distributions.


\edit{Some domain adaptation (DA) methods allow for bridging significant shifts between source and target data distributions. Gradual and multi-step (or transitive) methods~\cite{deepDA_survey} define the number and location of intermediate domains. 
Intermediate Domain Labeler~\cite{chen2021gradual} also creates multiple intermediate domains, where a coarse domain discriminator sorts them such that the cycle consistency of self-training is minimized.  
IDM\cite{IDM} mixes entire latent features to create an intermediate domain that lies on the shortest path between the target and source. In fact, by utilizing appropriate intermediate domains, the source knowledge can be better transferred to the target domain. By mixing the entire hidden features, this domain may lose the local ID-discriminative information and reduce the diversity of part features. This paper reduces the cross-modality gap by exchanging features between domains and creating intermediate virtual domains through gradual mixing during training.}

\vspace{-0.1cm}
 \section{Proposed Method}
\label{sec:proposed}
\vspace{-0.2cm}


Our BMDG approach relies on multiple virtual intermediate domains created from V and I images to learn a common representation space for V-I person ReID. 
Fig. \ref{fig:method}(a) shows the overall BMDG training architecture comprising two modules. (1) Our \textbf{part prototype alignment learning} module extracts semantically aligned and discriminative part prototypes from I and V modalities through hierarchical contrastive learning. Each prototype represents a specific part feature in a cropped person image. Exchanging aligned part prototypes allows for the gradual creation of ID-informative intermediate spaces. This module discovers and aligns part features that are distinct and informative about the person to transform the representation feature of one person between modalities robustly. (2) Our \textbf{bidirectional multi-step learning} module creates the auxiliary intermediate feature spaces at each step by progressively mixing more learned part prototypes from each modality. This gradually reduces the modality-specific information in the final feature representation.
In other words, two intermediate feature spaces are generated, at each step, by combining proportions of aligned sub-features from each modality. The model gradually learns to reduce modality-specific information and to leverage more common information from such intermediate domains by increasing the portions and making more complex training cases. BMDG trains the feature backbone by learning from easy samples with a lower modality gap to more complex ones with a higher gap. 
\begin{figure*}[ht]
  \centering
  \begin{subfigure}[t]{0.9\textwidth}
        \raisebox{-\height}{
        \includegraphics[width=\textwidth]{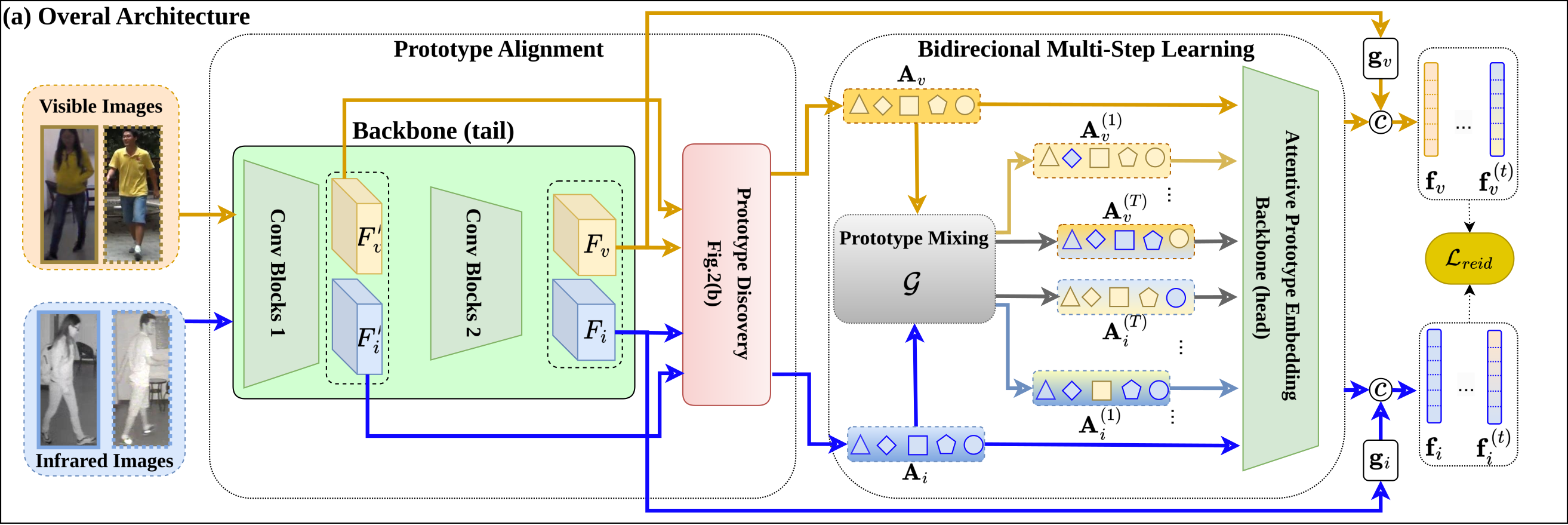}
        }
    \end{subfigure}
    \begin{subfigure}[t]{0.885\textwidth}
        \raisebox{-\height}{
        \includegraphics[width=\textwidth]{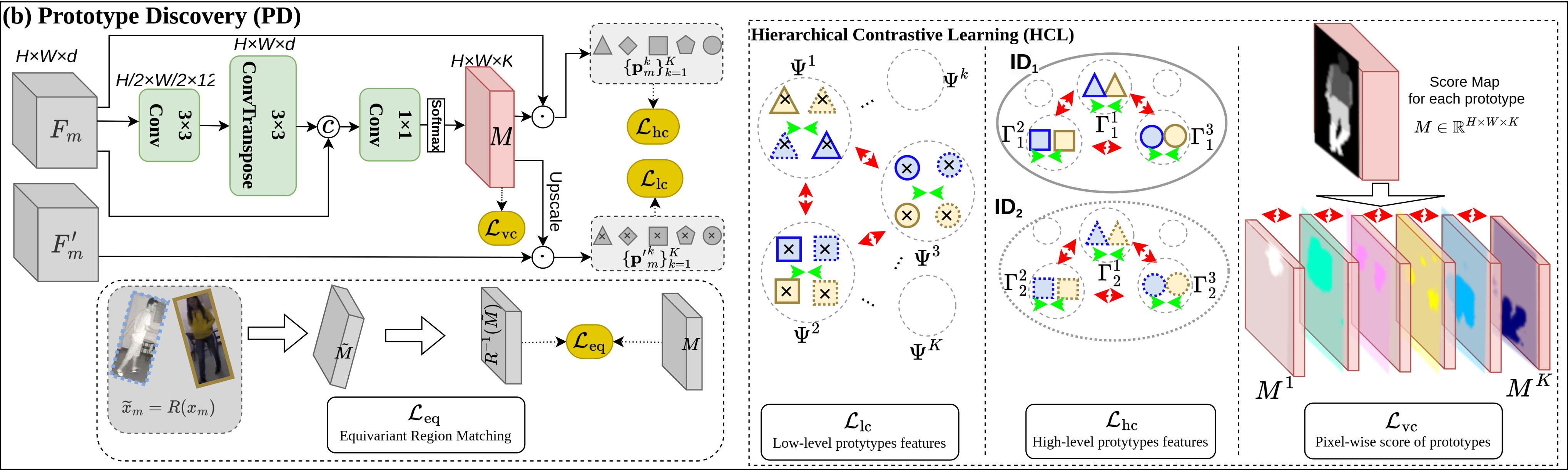}
        }
    \end{subfigure}
    \begin{subfigure}[t]{0.885\textwidth}
        \raisebox{-\height}{
        \includegraphics[width=\textwidth]{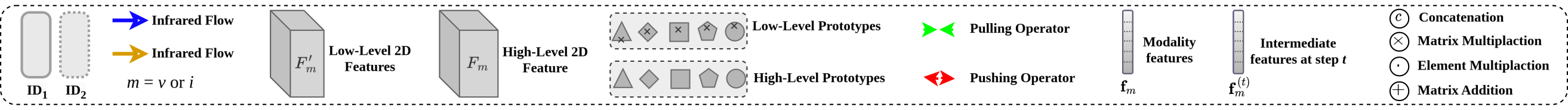}
        }
    \end{subfigure}
    \vspace{-.15cm}
  \caption{(a) Overall BMDG training architecture comprises two parts. The prototype alignment module (left) extracts body part prototype representations from V and I images. The bidirectional multi-step learning module (right) extracts discriminant features using multiple intermediate domains created by mixing prototype information. (b) Prototype discovery (PD) architecture mines prototypes from spatial features, and
  Hierarchical contrastive learning (HCL) encourages the prototypes to focus on similar semantics for all individuals without losing ID-discriminative information.}
  \label{fig:method}
  \vspace{-.2cm}
\end{figure*}

\noindent \textbf{Preliminaries:}  \label{sec:problem}
For the training process, let us assume a multimodal dataset for V-I ReID, represented by $\mathcal{V}=\{x^j_v, y_v^j\}_{j=1}^{N_v}$ and $\mathcal{I}=\{x^j_i,y_i^j\}_{j=1}^{N_i}$ which contains sets of V and I images from $C_y$ distinct individuals, with their ID labels. V-I ReID systems seek to match images captured from V and I cameras by training a deep backbone model to encode modality-invariant person embedding, denoted by $\mathbf{f}_v$ and $\mathbf{f}_i$. Given V and I images, the objective is to: 
\vspace{-0.1cm}
\begin{equation}
    \label{eq:problem}
    \max \{S(\mathbf{f}_v^n, \mathbf{f}_i^l) \cdot o \} 
    \text{ where } o\text{ = 1 if }  y_v^n = y_i^l \text{ else } o\text{ = -1}, 
    \vspace{-0.1cm}
\end{equation}
where $\mathbf{f}_v^n, \mathbf{f}_i^l \in \mathbb{R}^d$, $d$ is the dimension of the representation space, and $S(.,.)$ is similarity matching function.

\vspace{-0.15cm}
\subsection{Prototype Learning Module}
\vspace{-0.2cm}
To align persons' sub-features between different images in I and V, this module is used to discover different body-part attributes from diverse regions and learn to align these prototypes by making them complementary, interchangeable, and discriminative, as discussed in Section \ref{sec:intro}. 
\begin{align}
    \mathcal{H}\colon & \mathbb{R}^{W\times H \times 3} \to\mathbb{R}^{K \times d}\\
    & x \sim X  \mapsto \mathcal{H}(x) \sim P
\end{align}


\noindent \textbf{(a) Prototype Discovery (PD).} 
This module is proposed to extract the prototypes and discover several discriminative regions on the backbone feature maps. 
At first, the PD predicts a pixel-wise masking score (probability) $M \in \mathbb{R}^{H\cdot W \times K}$ for each prototype from the features map $\mathbf{F}_m \in \mathbb{R}^{H\cdot W \times d}$ where $H, W, d$ are the feature sizes, and $K$ is the total number of prototypes. $m$ is modality index which indicates $v$ as visible or $i$ as infrared domain. Then, it computes each prototype vector $\mathbf{p}_m^k$ by weighted aggregating of features in all pixels as:
\vspace{-0.15cm}
\begin{equation}
    \mathbf{p}_m^k = \frac{M_m^k}{|M_m^k|}\sum_{u \in U} [\mathbf{F}_m]_u \ , \; \; \;  |M_m^k| =  \sum_{u \in U}[M_m^k]_u
    \vspace{-0.1cm}
    \label{eq:pm}
\end{equation}
where for each pixel $u \in U = \{1,\dots,H\cdot W\}$, we have $\sum_{k=1}^{K}[M_m^k]_u=1$.
This module (illustrated in Fig. \ref{fig:method}(b) uses a shallow U-Net\cite{unet} architecture to create a region mask for each prototype. In the down-sample block, a convolution receptive field is applied on masks to make them lose locality by leveraging their neighbor pixels. At the same time, the up-sample propagates these high-level, coarsely localized masks into each original size. Skip connections are used to re-inject the local details lost in the down-sample phase. 


\noindent \textbf{(b) Hierarchical Contrastive Learning:}
To ensure the part prototypes represent the diverse attributes of each person, they must be non-redundant, and be independent of the others. This independence can be formulated that the mutual information ($MI$) between the distribution of random variables describing parts as $P^k$ should be zero as:
\vspace{-0.1cm}
\begin{equation}
    \text{MI}(P^k;P^q) = 0\; \forall k,q \in \{1,\dots,K\}, \; k \neq q,
    \label{eq:mi1}
    \vspace{-0.15cm}
\end{equation}
where $P^k$ is the distribution of the random variable of part $k$. 
Since the estimation of mutual information is complex and time-consuming, $MI(P^k,P^q)$ is minimized by directly reducing the cosine similarity between the extracted features of different parts as:
\vspace{-0.1cm}
\begin{equation}
    \min S \{(\mathbf{p}^k, \mathbf{p}^q)\} \; \forall k,q \in \{1,\dots,K\}, \; k \neq q,
    \vspace{-0.15cm}
\end{equation}
where $\mathbf{p}^k$ and $\mathbf{p}^q$ are features for part $k$ and $q$, respectively.
To ensure that the prototype features are semantically consistent around body parts that can be aligned in different samples, we maximize the similarity of $\mathbf{p}^k$ and all other prototypes with the same index in the training batch as $\mathbf{\hat{p}}^k$. 

To select samples in each batch with a positive prototype index, there are two options: the same parts belonging to the same or different persons (inter-person) as $\Psi^k$, and the same parts belonging to the same person with identity $y$ (intra-person) as $\Gamma^k_y$ ( see Fig. \ref{fig:method}.b). Since the goal is to extract dissimilar features in the representation space for two different persons, pulling apart the same prototypes from these two persons may disrupt this objective. 
Therefore, two contrastive losses as $\mathcal{L}_\text{lc}$ and $\mathcal{L}_\text{hc}$ are employed to provide inter-person information at a low-level and intra-person at a high-level:
\vspace{-0.25cm}
\begin{equation}
    \mathcal{L}_\text{lc} =- \sum_{k=1}^{K} \sum_{\mathbf{\hat{p}'} \in \Psi^k}\log \frac{e^{\mathbf{p}'^k\cdot\mathbf{\hat{p}'}/\tau}}{e^{\mathbf{p}'^k\cdot\mathbf{\hat{p}'}/\tau} + \sum_{q\neq k} e^{\mathbf{p'}^k\cdot\mathbf{p'}^q/\tau}},
    \label{eq:cons}
\end{equation}
\vspace{-0.30cm}
\begin{equation}
    \mathcal{L}_\text{hc} =- \sum_{y=1}^{C_y} \sum_{k=1}^{K} \sum_{\mathbf{\hat{p}} \in \Gamma^k_y}\log \frac{e^{\mathbf{p}^k\cdot\mathbf{\hat{p}}/\tau}}{e^{\mathbf{p}^k\cdot\mathbf{\hat{p}}/\tau} + \sum_{q\neq k} e^{\mathbf{p}^k\cdot\mathbf{p}^q/\tau}},
    \label{eq:cons2}
    \vspace{-0.15cm}
\end{equation}
where $\mathbf{p}_m'^k$ are low-level part prototype features vector computed same as Eq. (\ref{eq:pm}) but extracted from lower-level layers of the feature backbone.
Unlike contrastive learning introduced in \cite{PAT,choudhury2021unsupervised} that are used only for final features, the HCL optimizes at two levels as low-level and high-level features to force the model to cluster inputs based on parts and then cluster inside identities, as illustrated in Fig. \ref{fig:method}(b). For example, $\mathcal{L}_\text{lc}$ encourages the low-level prototype of legs for two persons to be similar and differ from other prototypes such as the torso. While $\mathcal{L}_\text{hc}$ encourages the high-level legs prototypes only differ from other prototypes in that person. 
Also, minimizing the distance between features of pixels belonging to the same part occurrence encourages the model to extract more similar features for each prototype:
\vspace{-0.25cm}
\begin{equation}
    \vspace{-0.15cm}
    \mathcal{L}_\text{c} =\sum_{k=1}^{K}\sum_{u \in U}[M_m^k\Vert \mathbf{p}_m^k -  \mathbf{F}_m\Vert]_u.
    \label{eq:part}
    \vspace{-0.15cm}
\end{equation}
Describing part occurrence to a single feature vector enables us to exchange these features for the corresponding prototype in different modalities and creates the intermediate step. 
Apart from making part features independent of each other by feature contrastive losses, 
for pushing prototypes to focus on different semantic body parts in different image locations, the visual contrastive is used as: 
\vspace{-0.15cm}
\begin{equation}
    \vspace{-0.15cm}
    \mathcal{L}_\text{vc} =\sum_{k=1}^{K}\sum_{q=k+1}^{K}\sum_{u \in U}[\Vert M^k_m -  M^q_m\Vert]_u,
    \label{eq:part2}
\end{equation}
to extract parts from diverse regions of backbone features with the minimum intersection.
\edit{
Also, for equivariant matching of regions, a random rigid transformation ($R$) is applied on the input images ($x_m$) to create the transformed images ($\Tilde{x}_m$). Both are then processed by the model, and the transformation on the masking score ($\Tilde{M}_m$) is inverted from the transformed images. The objective is to minimize the distance between the transformed and original maps:
\vspace{-0.15cm}
\begin{equation}
    \vspace{-0.15cm}
    \mathcal{L}_\text{eq} =\sum_{k=1}^{K}\Vert M^k_m -  R^{-1}(\Tilde{M}^k_m)\Vert.
    \label{eq:eq}
\end{equation}
}

\noindent \textbf{(c) Part Discrimination.} 
The information encoded by prototypes must describe the object in the foreground. That is, the mutual information between the joint probability $\{P^k\}_{k=1}^{K}$ and identity $Y$ should be maximized:
\vspace{-0.15cm}
\begin{equation}
    \text{MI}(P^1, \dots, P^{K};Y ) .
    \label{eq:mi2}
    \vspace{-.2cm}
\end{equation}
In the supplementary material, we proofed that for maximizing Eq. (\ref{eq:mi2}), we can minimize cross-entropy loss between each prototypes features and identities. In fact, by doing this, the model makes the part prototype regions lie on the object and have enough information about the foreground. In other words, each part prototype should be able to recover the identity of the person in the images.
So part ID-discriminative loss is defined as:
\vspace{-0.2cm}
\begin{equation}
    \mathcal{L}_\text{p} =\frac{-1}{K} \sum_{k=1}^{K} \sum_{c=1}^{C_y} y_c\log(W^k_c(\mathbf{p}_m^k)),
    \label{eq:pid}
    \vspace{-0.25cm}
\end{equation}
where $W^k$ is linear layer predicting the probability of identity $y_c$ from $\mathbf{p}_m^k$. \edit{To ensure prototype diversity and discrimination, we do not share parameters of $W^k$ and $W^q$ and randomly drop out a proportion of them during training, preventing any single part from dominating. }

\begin{table*}[!t]
 \vspace{-0.1 cm}
\centering
\vspace{-0.2cm}
\resizebox{\textwidth}{!}{
\begin{threeparttable}
\begin{tabular}{|c|l|c||c|c|c||c|c|c||c|c|c||c|c|c|} 
\hline
\multicolumn{3}{|c||}{\multirow{2}{*}{\textbf{Family}}}                     & \multicolumn{6}{c||}{\textbf{SYSU-MM01}}                                                                                                                                                                          & \multicolumn{6}{c|}{\textbf{RegDB}}                                                                                                                                                                                     \\ 
\cline{4-15}
\multicolumn{3}{|c||}{}                                                     & \multicolumn{3}{c||}{All Search}                                                                        & \multicolumn{3}{c||}{Indoor Search}                                                                     & \multicolumn{3}{c||}{Visible $\rightarrow$ Infrared}                                                          & \multicolumn{3}{c|}{Infrared $\rightarrow$ Visible}                                                     \\ 
\hline
\multicolumn{2}{|c|}{\textbf{Method}} & \textbf{Venue} & \textbf{R1}    & \textbf{R10}   & \textbf{mAP}   & \textbf{R1}    & \textbf{R10}  & \textbf{mAP}   & \textbf{R1}   & \textbf{R10}   & \textbf{mAP}  & \textbf{R1}    & \textbf{R10}   & \textbf{mAP}   \\  \hline \hline
\multirow{4}{*}{\rotatebox[origin=c]{90}{Global}} & AGW \cite{all-survey} & TPAMI'20                             & 47.50                                & -                          & 47.65                               & 54.17                                & -                          & 62.97                               & 70.05                               & -                                  & 50.19                              & 70.49                                & 87.21                              & 65.90                       \\
 &CAJ \cite{CAJ}      & ICCV'21                              & 69.88                                & -                          & 66.89                               & 76.26                                & 97.88                      & 80.37                               & 85.03                               & 95.49                              & 79.14                              & 84.75                                & 95.33                              & 77.82                       \\
 & RAPV-T \cite{zeng2023random}&ESA'23   & 63.97 & 95.30  & 62.33 & 69.00 & 97.39 & 75.41 & 86.81 & 95.81  & 81.02 & 86.60 & 96.14 & 80.52\\
 & G2DA \cite{WAN2023109150} & PR'23  &  63.94 & 93.34 & 60.73 & 71.06 & 97.31 & 76.01 & - & - & - & - & - & -   \\ 
 \hline
 
  \multirow{7}{*}{\rotatebox[origin=c]{90}{Part-based}} &DDAG \cite{DDAG}                                & ECCV'20                              & 54.74                                & 90.39                      & 53.02                               & 61.02                                & 94.06                      & 67.98                              & 69.34                               & 86.19                                  & 63.46                               & 68.06                                 & 85.15                                  & 90.31                        \\
  &SSFT  \cite{cmSSFT}                                & CVPR'20                              & 63.4                                 & 91.2                       & 62.0                                & 70.50                                & 94.90                      & 72.60                               & 71.0                                & -                                  & 71.7                               & -                                    & -                                  & -                           \\
  &MPANet \cite{wu2021Nuances}                                & CVPR'22                              & 70.58                                & 96.21                      & 68.24                               & 76.74                                & 98.21                      & 80.95                               & 83.70                               & -                                  & 80.9                               & 82.8                                 & -                                  & 80.7                        \\
  
  &SAAI \cite{SAAI}\tnote{a}              & ICCV'23          & 75.03            & -      & 71.69           & -            &        & -           & -           & -              & - & -            & -              & -   \\
  &SAAI \cite{SAAI} \tnote{b}              & ICCV'23          & 75.90            & -      & \underline{77.03}           & 82.20            &   -     & 80.01           & 91.07           & -              & \underline{91.45} & 92.09            & -              & \underline{92.01}   \\
 &CAL \cite{CATA}               & ICCV'23          & 74.66            & 96.47  & 71.73           & 79.69            & 98.93  & 83.68           & \underline{94.51}  & \textbf{99.70} & 88.67          & \underline{93.64}    & \textbf{99.46} & 87.61   \\
 &PartMix \cite{PartMix23}               & CVPR'23          & \underline{77.78}            & -  & 74.62           & 81.52            & -  & 84.83           & 84.93  & - & 82.52          & 85.66    & - & 82.27   \\
 \hline

\multirow{6}{*}{\rotatebox[origin=c]{90}{Intermediate}}  & SMCL\cite{wei2021syncretic}& ICCV'21& 67.39 & 92.84 & 61.78  & 68.84 & 96.55 & 75.56& 83.93 & -  & 79.83 & 83.05 & - & 78.57 \\
 & MMN \cite{zhang2021towards}&ICM'21  & 70.60 & 96.20  & 66.90 & 76.20 & \textbf{99.30} & 79.60& 91.60 & 97.70  & 84.10 & 87.50 & 96.00 & 80.50  \\ 
 & RPIG \cite{randLUPI}&ECCVw'22  & 71.08 & 96.42  & 67.56 & 82.35 & 98.30 & 82.73 & 87.95 & 98.3 & 82.73 & 86.80 & 96.02 & 81.26\\
 & FTMI \cite{sun2023visible}&MVA'23 & 60.5 & 90.5 & 57.3  & - & - & - & 79.00 & 91.10 & 73.60  & 78.8 & 91.3 & 73.7  \\ 
 
 &G2DA        \cite{WAN2023109150}                          & PR'23                                & 63.94                                & 93.34                      & 60.73                               & 71.06                                & 97.31                      & 76.01                               & -                                   & -                                  & -                                  & -                                    & -                                  & -                           \\

 &SEFL \cite{shape-Erase23}                                  & CVPR'23         & 75.18           & 96.87 & 70.12          & 78.40           & 97.46 & 81.20          & 91.07          & -             & 85.23         & 92.18           & -             & 86.59  \\
  \hline
& \multicolumn{1}{|c|}{BMDG (ours)\tnote{a} \text{ }} & -              &{76.15} & \underline{97.42}             & {73.21} & \underline{83.53} & 98.02                      & \underline{84.92} & 93.92 & 98.11                      & 89.18 & \underline{94.08} & 97.0                      & 88.67              \\

& \multicolumn{1}{|c|}{BMDG (ours)\tnote{b} \text{ }} & -              & \textbf{78.08} & \textbf{97.90}             & \textbf{78.22} & \textbf{83.59} & \underline{98.96}                      & \textbf{86.35} & \textbf{94.76} & \underline{98.91}                      & \textbf{92.21} & \textbf{94.56} & \underline{ 98.31}                      & \textbf{93.07}              \\
\hline
\end{tabular}
\footnotesize
   \begin{tablenotes}
        \item[a] without AIM  
        \item[b] with AIM 
      \end{tablenotes}
\vspace{-0.2cm}
\caption{Accuracy of the proposed BMDG and state-of-the-art methods on the SYSU-MM01 (single-shot setting) and RegDB datasets. All numbers are percent. Results for methods were obtained from the original papers. AIM \cite{SAAI} means re-ranking method.}
\label{tab:all-results}
\end{threeparttable}

}
 \vspace{-0.25 cm}
\end{table*}

\subsection{Bidirectional Multi-step Learning}
\vspace{-0.15cm}
After identifying prototypes, multiple auxiliary intermediate domains are created, and features are extracted using the Attentive Prototype Embedding (APE) module.

\noindent \textbf{(a) Attentive Prototype Embedding.}  
Instead of directly concatenating learned prototypes as the final feature, we process all other prototypes in input images. The motivation of this design is that each prototype may discriminate the part-attributes shared for all individuals rather than ID attributes. 
The Attentive Prototype Embedding module (which is detailed in the supply. materials), $\mathcal{F}$, leverages relevant person information between prototypes by applying an attention mechanism to achieve person-aware final features. To aggregate the ID-discriminative features of prototypes, the APE module first uses a fully connected layer to score and emphasize important channels in their feature map. It weights each prototype feature by computing the similarities between them. 
\begin{equation}
\begin{gathered}
    \mathcal{F}(\mathbf{A}_m) = \mathbf{W}_{\text{mlp}}(\mathbf{C}_m),   \text{ where }  \mathbf{C}_m = \mathbf{W}_v(\mathbf{A}_m) \otimes \mathbf{B}_m\text{, } \\ \mathbf{B}_m = \sigma(\mathbf{W}_q(\mathbf{A}_m) \otimes \mathbf{W}_k(\mathbf{A}_m)),
    \vspace{-0.25cm}
    \end{gathered}
\end{equation}
and 
\begin{equation}
\mathbf{A}_m=[\mathbf{p}_m^1;\mathbf{p}_m^2;\dots;\mathbf{p}_m^{K}],
\vspace{-0.1cm}
\end{equation}
where $\mathbf{A}_m \in \mathbb{R}^{K\times d}$, $\otimes$ is matrix multiplication and $\sigma$ is the Sigmund function. $\mathbf{W}_{\text{mlp}}$, $\mathbf{W}_v$, $\mathbf{W}_q$ and $\mathbf{W}_k$ are linear layer.  
Also, to increase the discriminative ability of the final feature embedding, the global features, $\mathbf{g}_m$, extracted by the backbone head are concatenated to the output of APE:
\vspace{-0.15cm}
\begin{equation}
    \mathbf{f}_m = [\mathcal{F}(\mathbf{A}_m); \mathbf{g}_m],
    \label{eq:f}
    \vspace{-0.1cm}
\end{equation}
where $\mathbf{g}_m$ is denoted by $\frac{1}{H\cdot W}\sum_{u \in U}[\mathbf{F}_m]_u$.

Our APE has two advantages: (1) it allows adjusting modality-agnostic attention between prototypes regardless of the modality by applying cross-modality prototype features, and (2) it improves the representation ability of final embedding by emphasizing most discriminative prototypes.




\noindent \textbf{(b) Bidirectional Multi-Step Learning.}  
To deal with the significant shift between modalities in the feature space, the model is trained via multiple intermediate steps that gradually bridge this domain gap. Using intermediates with less domain shift, the model gradually learns to leverage cross-modality discriminating clues at each step\cite{wang2022understanding,chen2021gradual,abnar2021gradual,zhang2021gradual}. Initially, the discrepancies between modalities are small, letting the model learn from the easier samples first, then converge on more complex cases with larger shifts. One solution to achieve these intermediates is to start from one modality as the source and transform gradually to the other as the target. However, this makes the model forget the learned knowledge of the source and be biased on the target.

To address this issue, modalities are transformed bidirectionally to create an intermediate domain at each step. By gradually transforming domains to each other, the domain gap vanishes over multiple steps. Each intermediate auxiliary step provides discriminative information about the person across both modalities, enabling the training process to transform inputs from one to the other. By creating these intermediate domains, our model aims to neither lose nor duplicate information from the main domains. For gradual transformation in each step, we exchange the features of the same prototypes from both modalities to create a mixed and virtual representation space with less domain discrepancies. These prototype-mixed intermediate feature spaces are used alongside V and I prototypes to provide APE along with the ability to extract common features from the modalities.
For step $t$, which $t\in \{1, \dots, T\}$ and $T$ is the number of intermediate steps, the intermediate features for the same person are generated using a mixing function $\mathcal{G}(.,.,.)$ as:
\vspace{-0.1cm}
\begin{equation}
    \mathbf{A}^{(t)}_m = \mathcal{G}(\mathbf{A}_m, \mathbf{A}_{\Tilde{m}},t),
\end{equation}
where $\Tilde{m} \neq m$ which mixes the prototypes from two modalities. For example, we use a simple but yet effective random prototype mixing strategy for $\mathcal{G}(.,.,.)$: 
\begin{equation}
    \mathcal{G}(\mathbf{A}_m, \mathbf{A}_{\Tilde{m}},t) =[\mathbf{p}_{r(m,\Tilde{m},t)}^1,\dots,\mathbf{p}_{r(m,\Tilde{m},t)}^K],
    \vspace{-0.1cm}
\end{equation}
where $r(.,.,.)$ is a weighted random selector:
\vspace{-0.2cm}
\begin{equation}
  r(m,\Tilde{m},t) =   \begin{cases}
  m  & t/T \leq \mathcal{U}(0,1) \\
  \Tilde{m} & \text{ else}
\end{cases}
\vspace{-0.2cm}
\end{equation}
where $\mathcal{U}(0,1)$ is a uniform random generator.  
For intermediate step ($t<T$), we have $\mathbf{f}^{(t)}_m = [\mathcal{F}(\mathbf{A}^{(t)}_m); \mathbf{g}_m]$ and in the last step   $\mathbf{f}^{(T)}_m = [\mathcal{F}(\mathbf{A}^{(T)}_m); \mathbf{g}_{\Tilde{m}}]$ is used.
The module $\mathcal{F}$ learns to gradually reduce the discrepancies in representation space by applying metric learning objectives bidirectionally between $\mathbf{f}_i$ and $\mathbf{f}^{(t)}_i$ and between $\mathbf{f}_v$ and $\mathbf{f}^{(t)}_v$ as :
\vspace{-0.2cm}
\begin{align}
\label{eqn:eqlabel}
\begin{split}
\mathcal{L}_{\text{bcc}} &= \mathcal{L}_{\text{cc}}(\mathbf{f}_v,\mathbf{f}^{(t)}_v) + \mathcal{L}_{\text{cc}}(\mathbf{f}_i,\mathbf{f}^{(t)}_i)
\\
 \mathcal{L}_{\text{bce}} &= \mathcal{L}_{\text{ce}}(\mathbf{f}_v) +\mathcal{L}_{\text{ce}}(\mathbf{f}^{(t)}_v) + \mathcal{L}_{\text{ce}}(\mathbf{f}_i) + \mathcal{L}_{\text{ce}}(\mathbf{f}^{(t)}_i)
 \\
 \mathcal{L}_{\text{re}} &= \mathcal{L}_{\text{bce}} + \mathcal{L}_{\text{bcc}},
\end{split}
\vspace{-0.2cm}
\end{align}
that $\mathcal{L}_{\text{cc}}$, $\mathcal{L}_{\text{ce}}$ are center cluster and cross-entropy losses \cite{wu2021Nuances}. 
Since $\mathbf{f}^{(t)}_i$ begins from $\mathbf{f}_i$ at first steps and approaches to $\mathbf{f}_v$ at ending steps, the model is gradually trained to extract more robust modality invariant features by seeing samples with low modality gap to the harder ones with higher gap\cite{wang2022understanding}.

\vspace{-0.1cm}
\subsection{Training and Inference}
\vspace{-0.15cm}
\edit{In each step, the model tries to represent the same feature space for two input domains, V/I and the intermediate ones, using the overall loss expressed as:
\vspace{-0.2cm}
\begin{equation}
\label{eq:all_losses}
    \mathcal{L} = \mathcal{L}_{\text{re}} + \lambda_{\text{f}}(\mathcal{L}_{\text{lc}}+\mathcal{L}_{\text{hc}}) + \lambda_{\text{v}}\mathcal{L}_{\text{vc}} + \lambda_{\text{c}}\mathcal{L}_{\text{c}} + \lambda_{\text{i}}\mathcal{L}_{\text{p}} + 
    \lambda_{\text{e}}\mathcal{L}_{\text{eq}}
    \vspace{-0.1cm}
\end{equation}
where $\lambda$ are hyper-parameters for weighting losses.
During inference, the tail and head backbones are combined to extract the prototypes and global features as $\mathbf{A}_i$, $\mathbf{A}_v$, $\mathbf{g}_i$ and $\mathbf{g}_v$ from given input image $x_i$ and $x_v$. Using Eq. (\ref{eq:f}), the $\mathbf{f}_i$ and $\mathbf{f}_v$ are computed by the APE. The matching score for a query input image is computed by cosine similarly.}
\vspace{-0.2cm}
 \section{Results and Discussion}
\vspace{-0.1cm}

In this section, we compare the BMDG with SOTA V-I ReID methods, including global, part-based, and intermediate approaches. Extensive ablation studies are conducted to show the effectiveness of the proposed bi-directional learning, the impact of the number of parts-prototypes, and intermediate steps during learning. The supplementary materials provide detailed information on our experimental methodology, including the implementation of baseline models, performance measures, and descriptions of the SYSU-MM01 \cite{SYSU}, RegDB \cite{regDB}, and LLCM \cite{LLCM} datasets.



\begin{table}[!b]
\centering
\vspace{-0.3cm}
\resizebox{0.99\linewidth}{!}{%
\begin{tabular}{|l|l||c|c||c|c|} 
\hline
\multicolumn{2}{|c||}{\multirow{2}{*}{\textbf{Family}}} & \multicolumn{4}{c|}{\textbf{LLCM}}  \\
\cline{3-6}
\multicolumn{2}{|l||}{}& \multicolumn{2}{c||}{V $\rightarrow$ I}  & \multicolumn{2}{|c|}{I $\rightarrow$ V} \\ \hline
\multicolumn{1}{|l|}{\textbf{Method}} & \textbf{Venue}   & \textbf{R1} & \textbf{mAP} & \textbf{R1}  & \textbf{mAP}\\ \hline \hline
DDAG\cite{DDAG} & ECCV'20 & 40.3 & 48.4 & 48.0 & 52.3 \\
CAJ\cite{CAJ} & ICCV'21 & 56.5 & 59.8 & 48.8 & 56.6 \\
RPIG\cite{randLUPI} & ECCVw'22 & 57.8 & 61.1 & 50.5 & 58.2 \\
DEEN\cite{DEEN} & CVPR'23 & 62.5 & 65.8 & 54.9 & 62.9 \\
\hline 

BMDG (ours) & - & \textbf{63.4}  & \textbf{66.3} & \textbf{56.4} & \textbf{63.5} \\
\hline
\end{tabular}}
\vspace{-0.15cm}
\caption{
Accuracy of the proposed BMDG and state-of-the-art methods on the Large LLCM Dataset. All numbers are percent.}
\label{tab:LLCM-results}
\end{table}

\vspace{-0.1cm}
\subsection{Comparison with State-of-Art Methods:}
\vspace{-0.2cm}
Table \ref{tab:all-results} compares the accuracy of BMDG with state-of-the-art V-I ReID approaches. Our experiments on the SYSU-MM01 and RegDB datasets show that BMDG outperforms the SOTA methods in the majority of situations. Compared to \cite{PartMix23}, BMDG has lower R1 accuracy (-1\%) in the All Search settings on the SYSU-MM01 dataset. However, it achieves higher performance in Indoor settings (+2\%) and on the RegDB dataset (+10\%). These results can be attributed to BMDG's ability to effectively capture more ID-related knowledge across modalities. This is achieved by learning discriminative part-prototypes and gradually reducing modality-specific information in the extracted final features. Additionally, the prototype alignment module creates gradual and bidirectional intermediate spaces without sacrificing discriminative ability, allowing BMDG to surpass intermediate methods. Moreover, BMDG can be easily integrated into different part-based V-I ReID models, enhancing generality through gradual training. To show its effectiveness, we integrated our approach into state-of-the-art part-based or prototype-based baseline models \cite{DDAG,wu2021Nuances,SAAI}. We executed the original code published by authors using the hyper-parameters and other configurations they provided with and without BMDG. Table \ref{tab:baselines} shows that BMDG  improves performance for mAP and rank-1 accuracy for all methods by an average of 2.02\% and 1.54 \%, respectively.
\edit{
Similar performance improvements are observed on the large and complex LLCM dataset (see Table \ref{tab:LLCM-results}). Since LLCM was introduced recently, few papers reported their results, so we executed other approaches with published code on the LLCM dataset. 
}

\begin{table}[!t]
\small
\centering
\vspace{-0.3cm}
\resizebox{0.9\linewidth}{!}{
\begin{tabular}{|l||l|l|}
\hline
\textbf{Method}         & \textbf{R1 (\%)} & \textbf{mAP (\%)} \\ \hline \hline
DDAG  \cite{DDAG}               & 53.62           & 52.71            \\
DDAG with BMDG                  & \textbf{55.36} (\textcolor{green}{+1.74$\uparrow$})          & \textbf{54.05} (\textcolor{green}{+1.34$\uparrow$})           \\ \hline
MPANet \cite{wu2021Nuances}     & 66.24           & 62.89            \\
MPANet with BMDG                & \textbf{68.74} (\textcolor{green}{+2.50$\uparrow$})          & \textbf{64.25} (\textcolor{green}{+1.36$\uparrow$})           \\ \hline
SAAI \cite{SAAI}                & 71.87           & 68.16            \\
SAAI with BMDG                  & \textbf{73.69} (\textcolor{green}{+1.82$\uparrow$})           & \textbf{70.08} (\textcolor{green}{+1.92$\uparrow$})           \\ \hline
\end{tabular}}
 \vspace{-0.25 cm}
 \caption{Accuracy of part-based ReID methods with BMDG on the SYSU-MM01, under single-shot setting. Results were obtained by executing the author's code on our servers.}
\label{tab:baselines}
 \vspace{-0.25 cm}
\end{table}

\vspace{-0.1cm}
\subsection{Ablation Studies:} \label{sec:Ablation}
\vspace{-0.15cm}
\noindent \textbf{(a) Step size and number of part prototypes.} 
We evaluate the best option for the number of parts and steps. Table \ref{tab:part_step} (top) shows that using the BMDG approach increases the R1 accuracy when increasing the number of steps $T$ for a specific number of prototypes $K$. Also, it increases performance in multi-step with $K$ parts, letting the model learn from the lower gap sample and converge on harder cases. 
%

\begin{table}[!b]
    \vspace{-0.1 cm}
      \centering
            \resizebox{0.8\linewidth}{!}{
            \begin{tabular}{|c||cccccc|}
            \hline
            \multirow{2}{*}{$T$} & \multicolumn{6}{c|}{\textbf{Number of part prototypes ($K$)}} \\ \cline{2-7}
                &3     & 4     & 5     & 6              & 7     & 10    \\ \hline \hline
            \multicolumn{7}{|c|}{\textbf{Prototype exchanging (Our)}}  \\ \hline
            0  & 68.25 & 69.24 & 69.40 & 70.27          & 70.03 & 68.12 \\
            1  & 69.97 & 70.81 & 72.07 & 71.97          & 71.31 & 69.28 \\
            2  & 71.20 & 72.35 & 73.98 & 73.61          & 72.45 & 71.25 \\
            3  & 73.32 & 73.94 & 74.11 & 74.98          & 73.22 & 71.67 \\
            4  & -     & 74.08 & 74.15 & \textbf{75.43} & 73.51 & 71.99 \\
            6  & -     & -     & -     & 75.37          & 73.52 & 72.15 \\
            10 & -     & -     & -     & -              & -     & 72.07 \\ \hline \hline
            \multicolumn{7}{|c|}{\textbf{Entire mixup (IDM \cite{IDM})}}  \\ \hline 
             0      & 68.25 & 69.24 & 69.4      & 70.27          & 70.03 & 68.12 \\
             1      & 68.53 & 69.19 & 69.56     & 70.53          & 70.3 & 68.41 \\
             4      & 68.70 & 70.03 & 69.75     & 70.84 & 70.77  & 68.53 \\
             6      & 69.13 & 70.81 & 70.1      & \textbf{71.16}  & 70.29 & 68.90 \\
             10     & 69.21 & 70.15 & 69.52     & 70.95           & 70.90 & 68.13 \\ \hline
            \end{tabular}
            }
    \vspace{-0.15cm}
    \caption{R1 accuracy of BMDG using (a) our prototype mixing and (b) IDM \cite{IDM} for different numbers of part prototypes ($K$) and intermediate steps ($T$). mAP is reported in suppl. materials.}
    \label{tab:part_step}
    \vspace{-.15cm}
\end{table}

Another option for each step, instead of exchanging some parts, is to use the Mixup\cite{mixup} or IDM \cite{IDM}  strategy for all parts from two modalities with normalized weight: $\mathbf{A}^{(t)}_m = \alpha^{(t)} \mathbf{A}_m + (1-\alpha^{(t)}) \mathbf{A}_{m'}$, 
where $\alpha$ starts from 0 to 1 w.r.t. step $t$ and number of steps. Results in Table \ref{tab:part_step} (bottom) show that using multi-step increases performance marginally since it does not utilize the ability of prototypes to create meaningful intermediate steps. This is explained by the fact that prototypes contain some modality-specific information, which is crucial for transforming domains in intermediate steps and may be lost in the averaging process. 



\noindent {\bf (c) Loss functions.}
Contrastive objectives between parts ($\mathcal{L}_{\text{lc}}$ and $\mathcal{L}_{\text{hc}}$) are essential for the model to detect similar semantics between objects, thereby creating an intermediate step by swiping corresponding parts. Table \ref{table:losses} shows this necessity when used in conjunction with $\mathcal{L}_\text{c}$. Using such regularization in the multi-step approach increases mAP by 3\% and R1 by 4\%. 
Also, results of part separation loss, $\mathcal{L}_\text{vc}$, indicate that using this loss pushes the parts to be scattered on the body regions and gives the part features a more robust representation. Additionally, it enables the intermediate steps to generalize the feature space better by increasing the performance over the single-step with multi-step setting. 
The part identity loss $\mathcal{L}_{\text{p}}$ lets the model extract ID-related prototype features that contain discriminative information about the object and generate an id-informative intermediate step. Without this objective, parts features are likely to focus on background information. As shown in the two last rows, performance increases when using this objective.
\begin{table}[!t]
\small
\centering
\vspace{-0.1cm}
\resizebox{\linewidth}{!}{
\begin{tabular}{|c|c|c|c|c|c||c|c||c|c|}
\hline
\multicolumn{6}{|c||}{\textbf{Losses}}  & \multicolumn{2}{c||}{\textbf{R1 (\%)}} & \multicolumn{2}{c|}{\textbf{mAP (\%)}} \\ \hline
 $\mathcal{L}_{\text{p}}$ & $\mathcal{L}_{\text{lc}}$ & $\mathcal{L}_{\text{hc}}$ & $\mathcal{L}_{\text{vc}}$ & $\mathcal{L}_{\text{c}}$ & $\mathcal{L}_{\text{eq}}$ & $T=0$     & $T=K$ & $T=0$      & $T=K$ \\ \hline \hline
\multicolumn{6}{|c||}{baseline}         & 51.09   & - & 49.68    & - \\ \hline
              &    \cmark                        &  \cmark                          &                     &                    &   & 62.19   & 65.28 & 58.91    & 60.73 \\ 
             &    \cmark                        &  \cmark                          &     \cmark                &                  &     & 66.35   & 69.50 & 63.01    & 66.48 \\ 

                           &                            &                            & \cmark                    & \cmark                      & & 66.67   & 67.31 & 61.04    & 61.68 \\ 
 \cmark                     &                            &                            &                           & \cmark                     & & 59.78   & 60.66 & 55.41    & 56.05 \\ 
 \cmark                     & \cmark                     & \cmark                     & \cmark                    &                            & & 68.30   & 69.04 & 66.87    & 66.99 \\ 
& \cmark                     & \cmark                     & \cmark                    &\cmark                             & & 69.68   & 71.20 & 67.08    & 68.13 \\ 
\cmark & \cmark                     & \cmark                     & \cmark                    &\cmark                             & & 69.90   & 72.67 & 68.11    & 69.75 \\ 
 \cmark                     & \cmark                     & \cmark                     & \cmark                    & \cmark                   &  \cmark & \textbf{70.27}   & \textbf{75.37} & \textbf{68.45}    & \textbf{72.71} \\ \hline
\end{tabular}
}
\vspace{-.2cm}
\caption{Impact on the accuracy of using different BMDG losses, using SYSU-MM01 for single-step ($T=0$) and multi-step ($T=K$) settings. The baseline $\mathcal{L}_{\text{re}}$ loss is used in all cases.}
\label{table:losses}
\end{table}

\noindent {\bf (d) Bidirectional domain generalization.}
To show the impact of the BMGD approach, we conduct experiments under the following settings: (1) single-step, (2) one-directional multi-step (from V $\rightarrow$ I and I $\rightarrow$ V), and (3) bidirectional for creating the intermediate series. The results are shown in Table~\ref{table:bidirectional}. Although using multiple intermediate steps from one modality to another improves the model's accuracy by allowing it to gradually learn and utilize common discriminative features, this process can lead to losing information about the source modality. The bi-directional approach addresses these issues and achieves the best performance, as it is not biased towards any specific modality.
\begin{table}[h!]
\centering
\vspace{-0.1cm}
\resizebox{\linewidth}{!}{
\begin{NiceTabular}{|c||cc|cc|cc|cc|}
\hline
\multirow{3}{*}{\textbf{Settings}} & \multicolumn{8}{c|}{\textbf{Number of part prototypes ($K$)}}  \\ \cline{2-9}
                & \multicolumn{2}{c|}{4}     & \multicolumn{2}{c|}{5}     & \multicolumn{2}{c|}{6}     & \multicolumn{2}{c|}{7}     \\ \cline{2-9}
                & R1 & mAP & R1 & mAP   & R1 & mAP  & R1 & mAP \\ \hline \hline
Single step              & 69.2 & 65.9 & 69.4 & 66.1  & 70.2 & 66.3 & 70.4& 66.5\\ 
One (V $\rightarrow$ I)   & 71.0 & 66.5 & 71.5& 67.1 & 72.3& 67.5 & 71.6& 66.8 \\ 
One (I $\rightarrow$ V)   & 70.8& 66.3 & 71.2& 67.0 & 72.6& 67.6  & 71.0& 66.7  \\ 
Bidirectional       & \textbf{74.0} & \textbf{71.8} & \textbf{74.1}& \textbf{71.9} & \textbf{75.4}& \textbf{72.8} & \textbf{73.5}& \textbf{70.2} \\ \hline
\end{NiceTabular}
\smallskip\footnotesize
}
\vspace{-.15cm}
\caption{Impact on rank-1 and mAP accuracy using different settings for multi-step domain generalization. All numbers are percent. $V \rightarrow I$ means that the intermediates are created only from V, by replacing prototypes from I features.}
\label{table:bidirectional}
\vspace{-0.5cm}
\end{table}

\vspace{-0.1cm}
\section{Conclusion}
\vspace{-0.2cm}

This paper introduces the BMDG framework for V-I ReID. It uses a novel prototype learning approach to learn the most discriminative and complementary set of prototypes from both modalities. These prototypes generate multiple intermediate domains between modalities by progressively mixing prototypes from each modality, reducing the domain gap. The effectiveness of BMDG is shown experimentally on several datasets, where it outperforms state-of-the-art methods for V-I person ReID. Results also highlight the impact of considering multiple intermediate steps and bidirectional training to improve accuracy. Moreover, BMDG can be integrated into other part-based V-I ReID methods, significantly improving their accuracy. A limitation of BMDG is its dependency on discriminant body parts.



\noindent\textbf{Acknowledgements.} This research was supported by the Natural Sciences and Eng.  Research Council of Canada.   

{\small
\bibliographystyle{ieee_fullname}
\bibliography{main}
}
\begin{appendices}
\newtheorem{theorem}{Theorem}
\newtheorem{proof}{Proof}

\newtheorem{prop}{Proposition}


\appendix
\numberwithin{equation}{section}
\numberwithin{figure}{section}
\numberwithin{table}{section}
\newcommand{\I}{\text{MI}}
\setcounter{page}{1}
\onecolumn

\title{Bidirectional Multi-Step Domain Generalization\\ for Visible-Infrared Person Re-Identification: Supplementary} 
\maketitle

\section{Proofs} 
\label{sec:proofs}

In Section \ref{sec:proposed} of the manuscript, our model seeks to represent input images using discriminative prototypes that are complementary. To ensure their complementarity, we minimize the mutual information (MI) between the data distributions of each pair of prototypes using a contrastive loss between feature representations of the different prototypes. To improve the discrimination, the prototype representations encode ID-related information by maximizing the MI between the joint distribution among all prototypes in the label distribution space. In this section, we prove that by minimizing the cross-entropy loss between features of each prototype class and the person ID in images, we can learn discriminative prototype representations. 

\subsection{Maximizing $\I(P^1,\dots,P^K;Y)$}

This section provides a proof that $ \I(P^1,\dots,P^K;Y)$ (Eq. \ref{eq:mi2}) can be lower-bounded by :
\begin{equation}
  \I(P^1;Y) + \dots + \I (P^K;Y),
\end{equation}
following the properties of mutual information.

\noindent{\bf P.1 (Nonnegativity)}
For every pair of random variables $X$ and $Y$:
\begin{equation}
    \I(X;Y) \geq 0
\end{equation}
\noindent{\bf P.2 }
For every pair random variables $X$, $Y$ that are independent:
\begin{equation}
    \I(X;Y) = 0.
\end{equation}

\noindent{\bf P.3 (Monotonicity)}
For every three random variables $X$, $Y$ and $Z$:
\begin{equation}
    \I(X;Y;Z) \leq \I(X;Y)
\end{equation}
\noindent{\bf P.4 }
For every three random variables $X$, $Y$ and $Z$, the mutual information of joint distortions $X$ and $Z$ to $Y$ is:
\begin{equation}
    \I(X,Z;Y) = \I(X;Y) + \I(Z;Y) - \I(X;Z;Y)
\end{equation}

\begin{equation}
    MI(P^1, \dots, P^{K};Y ) .
    \label{eq:mi3}
    \vspace{-.2cm}
\end{equation}
\begin{figure}[hb!]
    \centering
    \includegraphics[width=\linewidth]{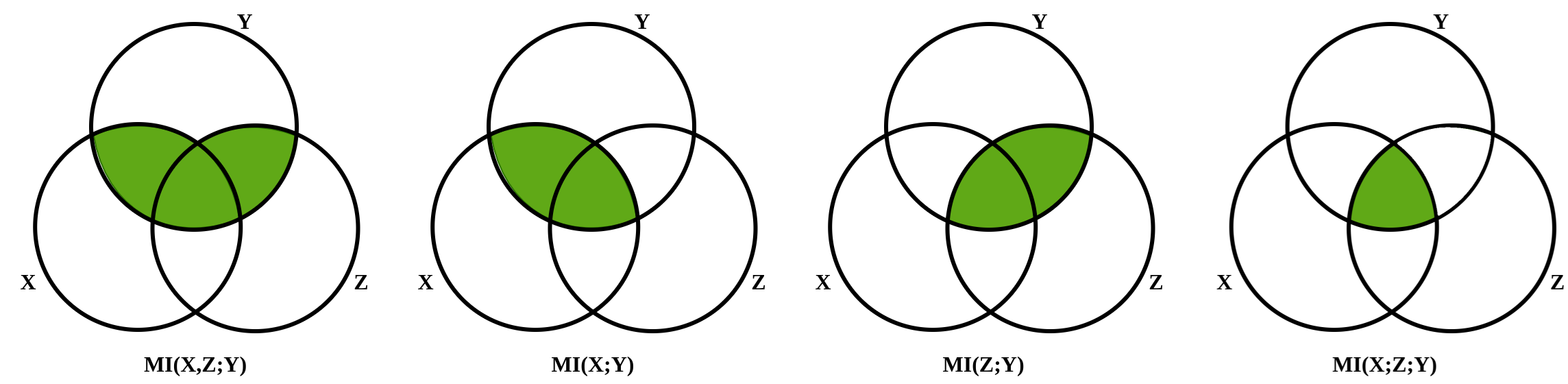}
    \caption{Venn diagram of theoretic measures for three variables $X$, $Y$, and $Z$, represented by the lower left, upper, and lower right circles, respectively.}
    \label{fig:enter-label}
\end{figure}

\begin{theorem}
Let $P^1,\dots, P^K$ and $Y$ be random variables with domains $\mathcal{P}^1,\dots,\mathcal{P}^K$ and $\mathcal{Y}$, respectively. Let every pair $P^k$ and $P^q$ ($k \neq q$) be independent. Then, maximizing $\I(P^1,\dots,P^K;Y)$ can be approximated by maximizing the sum of MI between each of $P^k$ to $Y$, $\sum_{k=1}^{K}{\I(P^k;Y)}$.
\end{theorem}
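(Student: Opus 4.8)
The plan is to establish the lower bound $\I(P^1,\dots,P^K;Y) \geq \sum_{k=1}^{K}\I(P^k;Y)$ by induction on $K$, drawing only on the four stated properties P.1--P.4. Since this inequality shows that the sum $\sum_k \I(P^k;Y)$ is dominated by the joint term, driving up the (tractable) sum via the per-prototype cross-entropy objective necessarily pushes up the (intractable) joint mutual information, which is what the theorem asserts.

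For the base case $K=2$, I would invoke P.4 with $X=P^1$ and $Z=P^2$ to write $\I(P^1,P^2;Y) = \I(P^1;Y)+\I(P^2;Y)-\I(P^1;P^2;Y)$. The triple term is then controlled by P.3 (monotonicity), which gives $\I(P^1;P^2;Y)\leq \I(P^1;P^2)$; since $P^1$ and $P^2$ are independent, P.2 forces $\I(P^1;P^2)=0$, so $\I(P^1;P^2;Y)\leq 0$. Subtracting a non-positive quantity only increases the sum, yielding $\I(P^1,P^2;Y)\geq \I(P^1;Y)+\I(P^2;Y)$.

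For the inductive step, I would assume the bound for $K-1$ prototypes, group $X=(P^1,\dots,P^{K-1})$ and $Z=P^K$, and apply P.4 once more to obtain $\I(X,P^K;Y)=\I(X;Y)+\I(P^K;Y)-\I(X;P^K;Y)$. The induction hypothesis lower-bounds $\I(X;Y)$ by $\sum_{k=1}^{K-1}\I(P^k;Y)$, so it remains only to verify that the interaction term $\I(X;P^K;Y)$ is non-positive. By P.3 this term is at most $\I(X;P^K)=\I(P^1,\dots,P^{K-1};P^K)$, which vanishes as soon as $P^K$ is independent of the block of remaining prototypes; substituting back then delivers $\I(P^1,\dots,P^K;Y)\geq \sum_{k=1}^{K}\I(P^k;Y)$, completing the induction.

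The main obstacle is precisely the vanishing of $\I(P^1,\dots,P^{K-1};P^K)$. The hypothesis as stated guarantees only \emph{pairwise} independence of the prototypes, whereas the decoupling step needs the block $(P^1,\dots,P^{K-1})$ to be \emph{jointly} independent of $P^K$ --- a strictly stronger condition once $K\geq 3$. I would close this gap in one of two ways: either strengthen the assumption to mutual (joint) independence of $\{P^k\}$, which is the intended design property of non-redundant prototypes enforced jointly by $\mathcal{L}_{\text{lc}}$, $\mathcal{L}_{\text{hc}}$ and $\mathcal{L}_{\text{vc}}$; or retain the residual interaction terms explicitly as a bounded approximation error, which is consistent with the theorem's wording that the joint objective is merely \emph{approximated} by the sum rather than reduced to it exactly.
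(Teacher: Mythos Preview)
Your approach is essentially the paper's: both arguments peel off one prototype at a time via P.4, kill the triple interaction with P.3 and pairwise independence (P.2), and iterate. The paper writes it as a recursive unfolding with $\tilde P^{k}=(P^{k+1},\dots,P^{K})$ rather than an explicit induction, but the skeleton is identical.

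There is one substantive difference worth knowing. Where you stop at the obstacle $\I(P^1,\dots,P^{K-1};P^K)$ and note that pairwise independence is not enough, the paper applies P.4 \emph{once more} to this very term: it expands $\I(P^1;\tilde P^{1})=\I(P^1;P^2)+\I(P^1;\tilde P^{2})-\I(P^1;P^2;\tilde P^{2})$, uses $\I(P^1;P^2)=0$ and $\I(P^1;P^2;\tilde P^{2})\le \I(P^1;P^2)=0$, and recurses down to $\I(P^1;P^K)=0$. In this way the paper claims the interaction term is exactly zero and hence obtains an \emph{equality} $\I(P^1,\dots,P^K;Y)=\sum_k \I(P^k;Y)$ rather than your lower bound. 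Your diagnosis of the gap is nonetheless well placed: the paper's extra step silently treats the triple information as nonnegative (via P.1) to pin it at zero, which is precisely the point where pairwise versus joint independence matters. Your two proposed fixes --- assume mutual independence, or keep the residual as an approximation error --- are both consistent with how the theorem is phrased and used.
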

\begin{proof}
First, we define $\Tilde{P}^{k}$ as the joint distribution of $P^{k+1},\dots,P^{K}$. Using the {\bf P.4} we have:
\begin{equation}
    \I(P^1, \Tilde{P}^{1};Y) = 
    \underbrace{\I(P^1 ;Y)}_{\alpha}  + \underbrace{\I(\Tilde{P}^{1};Y)}_{\beta} - \underbrace{\I(P^1;\Tilde{P}^{1};Y)}_{\gamma},
    \label{eq:proof1}
\end{equation}
To maximize this, $\alpha$ and $\beta$ should be maximized and $\gamma$ minimized. Given {\bf P.3}, $\min \I(P^1;\Tilde{P}^{1};Y)$ can be upper-bounded by $\min \I(P^1;\Tilde{P}^{1})$:
\begin{equation}
    \label{eq:temp}
    \I(P^1;\Tilde{P}^{1};Y) \leq   \I(P^1;\Tilde{P}^{1}),
\end{equation}
So by minimizing the right term of Eq. \ref{eq:temp}, $\gamma$ is also minimized. We show that $\I(P^1;\Tilde{P}^{1})=0$ by expanding $\Tilde{P}^1$ to $(P^2, \Tilde{P}^2)$:
\begin{equation}
    \I(P^1; P^2, \Tilde{P}^{2}) = 
    \I(P^1 ;P^2) + \I(P^1; \Tilde{P}^{2}) - \I(P^1;P^2;\Tilde{P}^{2}).
\end{equation}
Given {\bf P.1} and Eq. \ref{eq:mi1}, $\I(P^1 ;P^2) = 0$ and $\I(P^1;P^2;\Tilde{P}^{2}) \leq \I(P^1;P^2) = 0$ so that:
\begin{equation}
    \I(P^1; P^2, \Tilde{P}^{2}) = \I(P^1; \Tilde{P}^{2}).
\end{equation}
After recursively expanding $\Tilde{P}^{2}$:
\begin{equation}
    \I(P^1; P^2, \Tilde{P}^{2}) = 0.
\end{equation}
To maximize $\beta$, we can rewrite and expand recursively in Eq. \ref{eq:proof1} :
\begin{equation}
    \I(\Tilde{P}^{1};Y) =\I(P^2, \Tilde{P}^{2};Y) =  \underbrace{\I(P^2;Y)}_{\text{maximizing}} + \underbrace{\I(\Tilde{P}^{2};Y)}_{\text{expanding}} - \underbrace{\I(P^2;\Tilde{P}^{2};Y)}_{0}.
\end{equation}
Therefore, it can be shown that:
\begin{equation}
    \I(\Tilde{P}^{k};Y) =\I(P^{k+1}, \Tilde{P}^{k+1};Y) =  \I(P^{k};Y) +\dots +\I(P^{K};Y) \;\;\; \forall k\in \{0,\dots,K-1\}.
\end{equation}
and for $k=0$, we have:
\begin{equation}
     \I(P^1, \dots, P^{K};Y) = \sum_{k=1}^{K}{\I(P^k;Y)},
\end{equation}
Finally, for maximizing $\I(P^1, \dots, P^{K};Y)$, we need to maximize each $\I(P^k;Y)$ so that each prototype feature contains Id-related information and is complemented. In other words, each prototype seeks to describe the input images from different aspects. 
\end{proof}

\subsection{Maximizing $\I(P^k;Y)$}
In Section \ref{sec:proposed}, the MI between the representation of each prototype and the label of persons are maximized by minimizing cross-entropy loss (see Eq: \ref{eq:pid} of the manuscript). This approximation is formulated as {\bf Proposition \ref{prop:cross}}. 
\begin{prop}
\label{prop:cross}
Let $P^k$ and $Y$ be random variables with domains $\mathcal{P}^k$ and $\mathcal{Y}$, respectively. Minimizing the conditional cross-entropy loss of predicted label $\hat{Y}$, denoted by $\mathcal{H}(Y; \hat{Y}|P^k)$, is equivalent to maximizing the $\I(P^k; Y)$
\end{prop}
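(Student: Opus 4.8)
The plan is to establish the standard information-theoretic identity linking mutual information to conditional entropy, then invoke the relationship between conditional entropy and cross-entropy. First I would write the mutual information as $\I(P^k; Y) = \mathcal{H}(Y) - \mathcal{H}(Y \mid P^k)$, where $\mathcal{H}(Y)$ is the (fixed) entropy of the identity labels and $\mathcal{H}(Y \mid P^k)$ is the conditional entropy of the true label given the prototype representation. Since $\mathcal{H}(Y)$ does not depend on the model parameters (the label distribution is fixed by the dataset), maximizing $\I(P^k; Y)$ is equivalent to minimizing $\mathcal{H}(Y \mid P^k)$.

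Next I would connect the conditional entropy to the cross-entropy that is actually optimized in Eq.~(\ref{eq:pid}). The key observation is that the true conditional entropy $\mathcal{H}(Y \mid P^k)$ is a lower bound on the conditional cross-entropy $\mathcal{H}(Y; \hat{Y} \mid P^k)$ between the true label $Y$ and the predicted label $\hat{Y} = W^k(\mathbf{p}^k)$. This follows from the nonnegativity of the conditional Kullback--Leibler divergence: writing
\begin{equation}
\mathcal{H}(Y; \hat{Y} \mid P^k) = \mathcal{H}(Y \mid P^k) + \mathrm{KL}\!\left(p(Y \mid P^k)\,\|\,q(\hat{Y} \mid P^k)\right),
\end{equation}
where $q$ is the model's predictive distribution, and noting the KL term is always nonnegative and vanishes exactly when the classifier $W^k$ recovers the true posterior $p(Y \mid P^k)$. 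Thus driving the cross-entropy down toward its minimum pushes $q$ toward the true posterior and simultaneously tightens the bound on $\mathcal{H}(Y \mid P^k)$.

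Combining the two steps yields the chain: minimizing $\mathcal{H}(Y; \hat{Y} \mid P^k)$ minimizes an upper bound on $\mathcal{H}(Y \mid P^k)$, which (since $\mathcal{H}(Y)$ is constant) maximizes the lower bound on $\I(P^k; Y)$. With a classifier of sufficient capacity the KL gap can in principle be closed, making the cross-entropy minimization equivalent to maximizing $\I(P^k; Y)$ in the sense claimed. I would state this equivalence as holding in the realizable regime and note that otherwise it is a bound-based surrogate.

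The main obstacle I anticipate is making the ``equivalent to'' claim precise rather than merely a one-sided bound: the exact equivalence requires the assumption that the parametric classifier family is rich enough to represent the true posterior $p(Y \mid P^k)$, so that the KL residual can be made zero. I would therefore be careful to state this capacity assumption explicitly, and present the result as: minimizing the cross-entropy is equivalent to maximizing $\I(P^k; Y)$ up to the constant $\mathcal{H}(Y)$ and the approximation error incurred by the classifier's limited capacity. The routine algebra in the entropy decomposition is standard and I would not belabor it.
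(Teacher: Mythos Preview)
Your proposal is correct and follows essentially the same route as the paper: the identity $\I(P^k;Y)=\mathcal{H}(Y)-\mathcal{H}(Y\mid P^k)$ with $\mathcal{H}(Y)$ constant, followed by the decomposition $\mathcal{H}(Y;\hat{Y}\mid P^k)=\mathcal{H}(Y\mid P^k)+\mathcal{D}_{\mathrm{KL}}(\cdot\Vert\cdot)\geq \mathcal{H}(Y\mid P^k)$. The only minor difference is in how the bound is tightened to an equivalence: the paper argues via a two-phase optimization (first the classifier head $W^k$ drives the KL term to zero with the prototype module fixed, then the prototype module minimizes $\mathcal{H}(Y\mid P^k)$ with $W^k$ fixed), whereas you invoke a classifier-capacity assumption; both serve the same purpose and neither is more rigorous than the other.
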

\begin{proof}
    Let us define the MI as entropy,
    \begin{equation}
        \I(P^k;Y) = \underbrace{\mathcal{H}(Y)}_{\delta} - \underbrace{\mathcal{H}(Y|P^k)}_{\xi}
    \end{equation}
    Since the domain $\mathcal{Y}$ does not change, the entropy of the identity $\delta$ term is a constant and can therefore be ignored. Maximizing $\I(P^k,Y)$ can only be achieved by minimizing the $\xi$ term. We show that $\mathcal{H}(Y|P^k)$ is upper-bounded by our cross-entropy loss (Eq. \ref{eq:pid}), and minimizing such loss results in minimizing the $\xi$ term. By expanding its relation to the cross-entropy \cite{boudiaf2020unifying}:
    \begin{equation}
        \label{eq:cross}
        \mathcal{H}(Y; \hat{Y}|P^k) = \mathcal{H}(Y|P^k) + \underbrace{\mathcal{D}_{\text{KL}}(Y||\hat{Y}|P^k)}_{\geq 0} , 
    \end{equation}
    where:
    \begin{equation}
        \mathcal{H}(Y|P^k) \leq \mathcal{H}(Y; \hat{Y}|P^k).
    \end{equation}
    Through the minimization of Eq. \ref{eq:pid}, training can naturally be decoupled in 2 steps. First, weights of the prototype module are fixed, and only the classifier parameters (i.e., weight $W^k$ of the fully connected layer) are minimized w.r.t. Eq. \ref{eq:cross}. Through this step, $\mathcal{D}_{\text{KL}}(Y||\hat{Y}|P^k)$ is minimized by adjusting $\hat{Y}$ while the $\mathcal{H}(Y|P^k)$ does not change.
    In the second step, the prototype module’s weights are minimized w.r.t. $\mathcal{H}(Y|P^k)$, while the classifier parameters $W^k$ are fixed.
\end{proof}

\newpage
\section{Additional Details on the proposed method }
\subsection{Training Algorithm}

To train the model, BMDG uses a batch of data containing $N_b$ person with $N_p$ positive images from the V and I modalities. Algorithm \ref{alg:joint_learning} shows the details of the BMGD training strategy for optimizing the feature backbone by gradually increasing mixing prototypes.

At first, the prototype mining module extracts $K$ prototypes from infrared and visible images in lines 4 and 5 . Then, at lines 6 and 7, $\mathcal{G}$ function mixes these prototypes from each modality to create two intermediate features. It is noted that the ratio of mixing gradually increases w.r.t the step number $t$ to create more complex samples. To refine the final feature descriptor for input images, the attentive embedding module, $\mathcal{F}$, is applied to prototypes to leverage the attention between them.
At the end of each iteration, the model's parameters will be optimized by minimizing the cross-modality ReID objectives between each modality features vector and its gradually created intermediate.

\begin{algorithm}
\caption{BMDG Training Strategy.}
\label{alg:joint_learning}
\begin{algorithmic}[1]
\Require $\mathcal{S} = \{ \mathcal{V}, \mathcal{I}\}$ as training data
and $T, K$ as hyper-parameters
\For{$t = 1, \dots, T$} \Comment{over $T$ steps \ \ \ }
\While{all batches are not selected}
    \State $x_v^j, x_i^j \leftarrow \textbf{batchSampler}(N_b,N_p)$  
    \State extract prototypes $\mathbf{A}_v^{j}$ and global features $\mathbf{g}_v^{j}$ from visible images $v^j$ \Comment{left-side of Fig. \ref{fig:method}(a) \ \ } 
    \State extract prototypes $\mathbf{A}_i^{j}$ and global features $\mathbf{g}_i^{j}$ from infrared images $i^j$ \Comment{left-side of Fig. \ref{fig:method}(a) \ \ } 
    \State $\mathbf{A}_v^{(t)} \leftarrow \mathcal{G}({\mathbf{A}_v^j}, {\mathbf{A}_i^j},t)$ \Comment{V intermediate by gradually increasing the mixing ratio w.r.t $t$ from I prototypes  \ \ \ } 
    \State $\mathbf{A}_i^{(t)} \leftarrow \mathcal{G}({\mathbf{A}_i^j}, {\mathbf{A}_v^j},t)$ \Comment{I intermediate by gradually increasing the mixing ratio w.r.t $t$ from V prototypes} 
    \If{$t \leq T$}{:}
        \State $\mathbf{f}_v^{(t)} \leftarrow [\mathcal{F}(\mathbf{A}^{(t)}_v); \mathbf{g}_v]$ \Comment{embedding intermediate visible features \ \ \ } 
        \State $\mathbf{f}_i^{(t)} \leftarrow [\mathcal{F}(\mathbf{A}^{(t)}_i); \mathbf{g}_i]$ \Comment{embedding intermediate infrared features} 
    \Else{:}
        \State $\mathbf{f}_v^{(t)} \leftarrow [\mathcal{F}(\mathbf{A}^{j(t)}_v); \mathbf{g}_i]$ 
        \State $\mathbf{f}_i^{(t)} \leftarrow [\mathcal{F}(\mathbf{A}^{(t)}_i); \mathbf{g}_v]$ 
    \EndIf
    \State update model's parameters by optimizing Eq. \ref{eq:all_losses}
\EndWhile
\EndFor
\end{algorithmic}
\end{algorithm}

\subsection{Attentive Prototype Embedding}
Details of the Attentive Prototype Embedding module are depicted in \ref{fig:ape}. 
\begin{figure}[!b]
    \centering
    \includegraphics[width=0.50\textwidth]{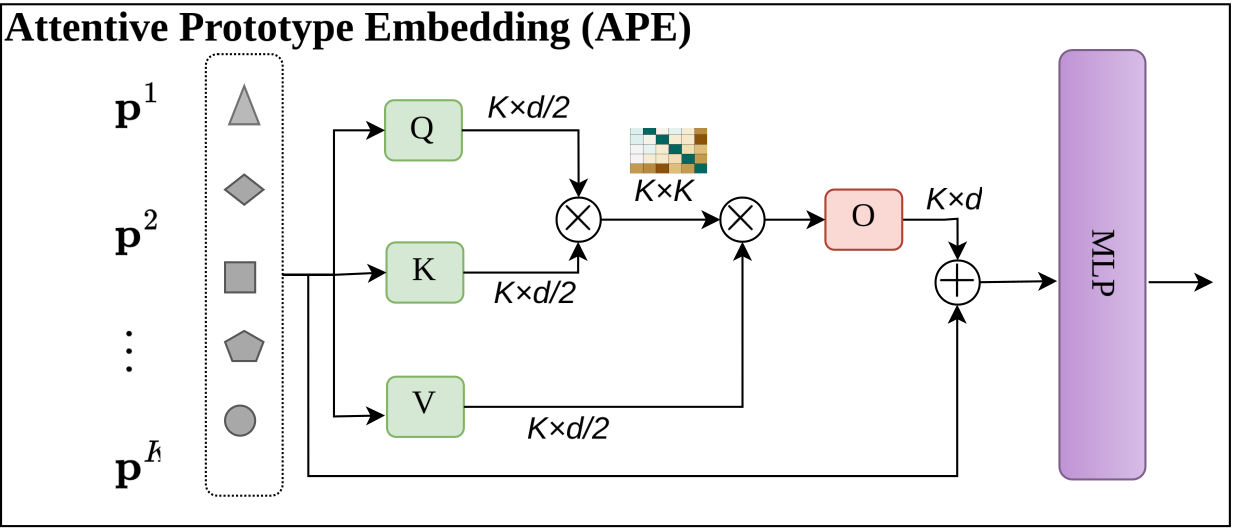}
    \caption{Attentive prototype embedding (APE) architecture.}
    \label{fig:ape}
\end{figure}

\section{Additional Details on the Experimental Methodology}


\subsection{Datasets:} 
Research on cross-modal V-I ReID has extensively used the SYSU-MM01 \cite{SYSU}, RegDB \cite{regDB}, and recently published LLCM \cite{LLCM} datasets. SYSU-MM01 is a large dataset containing more than 22K RGB and 11K IR images of 491 individuals captured with 4 RGB and 2 near-IR cameras, respectively. Of the 491 identities, 395 were dedicated to training, and 96 were dedicated to testing. Depending on the number of images in the gallery, the dataset has two evaluation modes: single-shot and multi-shot. RegDB contains 4,120 co-located V-I images of 412 individuals. Ten trial configurations randomly divide the dataset into two sets of 206 identities for training and testing. The tests are conducted in two ways -- comparing I to V (query) and vice versa.  An LLCM dataset consists of a large, low-light, cross-modality dataset that is divided into training and testing sets at a 2:1 ratio.

\subsection{Experimental protocol:}
We used a pre-trained ResNet50 \cite{resnet} as the deep backbone model. Each batch contains 8 RGB and 8 IR images from 10 randomly selected identities. Each image input is resized to 288 by 144, then cropped and erased randomly, and filled with zero padding or mean pixels. ADAM optimizer with a linear warm-up strategy was used for the optimization process. We trained the model by 180 epochs, in which the initial learning rate is set to 0.0004 and is decreased by factors of 0.1 and 0.01 at 80 and 120 epochs, respectively. $K=6$, $T=4$, $\lambda_\textit{f}=0.1$, $\lambda_\textit{v}=0.05$, $\lambda_\textit{p}=0.2$ and $\lambda_\textit{i}=0.4$ are set based on the analyses shown in the ablation study in the main paper and in Section \ref{sec:params}. $\lambda_\textit{eq}=0.5$ is for all experiments.  

\subsection{Performance measures:}
We use Cumulative Matching Characteristics (CMC) and Mean Average Precision (mAP) as assessment metrics in our study. In CMC, rank-k accuracy is measured to determine how likely it is that a precise cross-modality image of the person will be present in the top-k retrieved results. As an alternative, mAP can be used as a measure of image retrieval performance when multiple matching images are found in a gallery.

\section{Additional Quantitative Results}


\subsection{Hyperparameter values:} \label{sec:params}

This subsection analyzes the impact of $\lambda_\textit{f}$, $\lambda_\textit{v}$, $\lambda_\textit{p}$, and $\lambda_\textit{i}$ on V-I ReID accuracy. We initially set $\lambda_\textit{v}=0.01$, $\lambda_\textit{p}=0.05$, and $\lambda_\textit{i}=0.8$, experimenting with various values for $\lambda_\textit{f}$. As shown in Fig. \ref{fig:params}, accuracy improves with increasing $\lambda_\textit{f}$ until it reaches 0.1. Elevated $\lambda_\textit{f}$ enhances prototype diversity, boosting the discriminative ability of final features in the diverse space. However, excessively high values disperse prototype features in the feature space, diminishing discriminability and hindering accurate identification.

Similar trends are observed when $\lambda_\textit{p}=0.05$ and $\lambda_\textit{i}=0.8$, varying $\lambda_\textit{v}$ from 0.01 to 0.05, resulting in improved performance. Higher $\lambda_\textit{v}$ compresses prototype regions excessively, lacking sufficient ID-related information. Conversely, $\lambda_\textit{i}$ enhances the discriminative capabilities of prototypes in images. Balancing these factors, we find optimal values of 0.05 and 0.4 for $\lambda_\textit{v}$ and $\lambda_\textit{i}$, respectively. Additionally, based on experimentation, we set $\lambda_\textit{p}=0.2$ at the end of our analysis.
\begin{figure*} [!ht]
  \centering
  \begin{subfigure}{0.24\linewidth}
    \includegraphics[width=\linewidth]{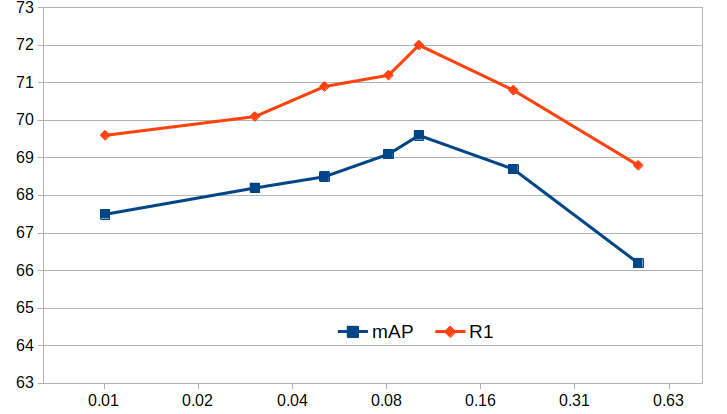}
    \caption{$\lambda_\textit{f}$.}
  \end{subfigure}
  \hfill
  \begin{subfigure}{0.24\linewidth}
    \includegraphics[width=\linewidth]{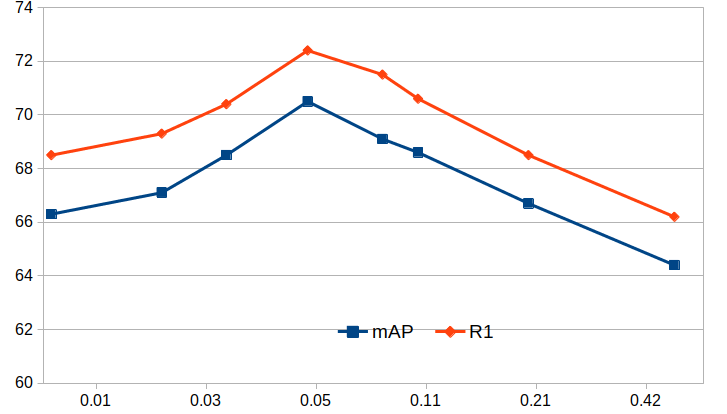}
    \caption{$\lambda_\textit{v}$.}
  \end{subfigure}
  \hfill
  \begin{subfigure}{0.24\linewidth}
    \includegraphics[width=\linewidth]{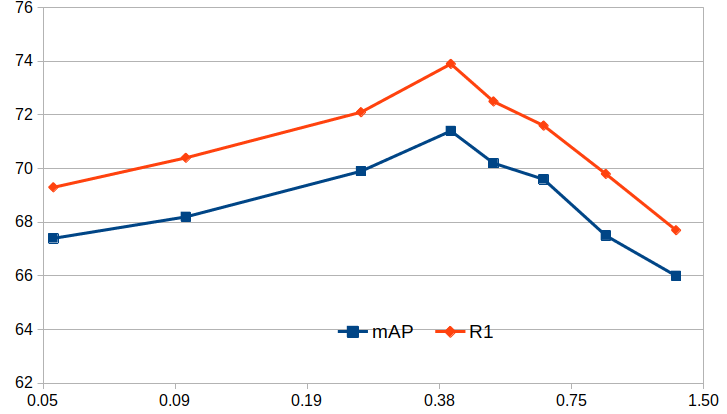}
    \caption{$\lambda_\textit{i}$.}
  \end{subfigure}
  \begin{subfigure}{0.24\linewidth}
    \includegraphics[width=\linewidth]{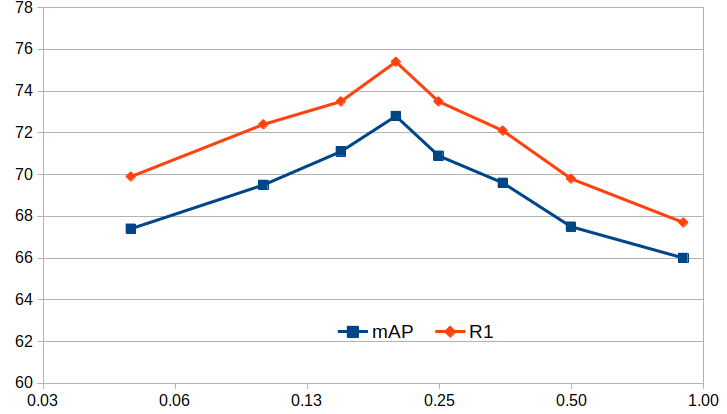}
    \caption{$\lambda_\textit{p}$.}
  \end{subfigure}
  \caption{Accuracy of the proposed BMDG over $\lambda_\textit{f}$, $\lambda_\textit{v}$, $\lambda_\textit{i}$, and $\lambda_\textit{p}$ values on SYSU-MM01 dataset in all-search and single-shot mode.}
  \label{fig:params}
\end{figure*}

%
\subsection{Step size and number of part prototypes:}
In section \ref{sec:Ablation}, we discussed the step size and number of part prototypes based on Rank-1 accuracy. Here we report the mAP measurement for Table \ref{tab:part_step} in paper in Table \ref{tab:part_step_mAP} and Table \ref{tab:part_step2} in paper in Table \ref{tab:part_step2_mAP}, respectively:

\begin{table}[!htb]
    \caption{mAP accuracy of BMDG using (a) our prototype mixing and (b) Mixup \cite{mixup} setting for different numbers of part prototypes ($K$) and intermediate steps ($T$).}
    \begin{subtable}{.5\linewidth}
      \centering
        \caption{Prototype exchanging}
        \label{tab:part_step_mAP}
            \resizebox{0.85\linewidth}{!}{
                \begin{tabular}{|c||cccccc|}
\hline
\multirow{2}{*}{$T$} & \multicolumn{6}{c|}{\textbf{Number of part prototypes ($K$)}} \\ \cline{2-7}
    &3     & 4     & 5     & 6              & 7     & 10    \\ \hline \hline
0  & 65.98 & 67.03 & 67.23 & 68.11          & 67.55 & 65.66 \\
1  & 67.42 & 68.72 & 69.28 & 69.46          & 68.32 & 69.28 \\
2  & 69.51 & 70.08 & 70.72 & 71.14          & 69.97 & 71.25 \\
3  & 71.44 & 71.69 & 71.82 & 72.02          & 71.06 & 71.67 \\
4  & -     & 71.98 & 72.15 & \textbf{72.86} & 71.19 & 69.00 \\
6  & -     & -     & -     & 72.40          & 71.17 & 69.54 \\
10 & -     & -     & -     & -              & -     & 69.46 \\ \hline
\end{tabular}
            }
    \end{subtable}%
    \begin{subtable}{.5\linewidth}
      \centering
        \caption{Mixup \cite{mixup}}
        \label{tab:part_step2_mAP}
         \resizebox{0.85\linewidth}{!}{
        \begin{tabular}{|c||cccccc|} \hline 
\multirow{2}{*}{$T$} & \multicolumn{6}{c|}{\textbf{Number of part prototypes ($K$)}} \\ \cline{2-7}
       &   3   & 4     & 5         & 6              & 7     & 10    \\ \hline  \hline
0      & 65.98 & 67.03 & 67.23     & 68.11          & 67.55 & 65.66 \\
1      & 66.31 & 67.22 & 67.31     & 68.28          & 68.5  & 65.8 \\
4      & 66.50 & 67.68 & 67.79     & 68.52          & 68.49 & 65.88 \\
6      & 66.98 & 67.77 & 68.05     & \textbf{68.73} & 68.56 & 66.02 \\
10     & 67.04 & 67.60 & 67.93     & 68.65          & 68.54 & 65.57 \\ \hline
\end{tabular}
        }
    \end{subtable} 
     \vspace{-0.5 cm}
\end{table}

\subsection{Model efficiency:} \label{sec:exp_gen}
Our BMDG proposed a method to extract alignable part-prototypes in feature extraction and then compute an attention embedding for the final representation features. In Table \ref{tab:infrence_complex}, we showed each component size and time complexity in the inference time compared to the baseline we used.

\begin{table}[ht]
\centering
\setlength\tabcolsep{2.5pt}
\caption{Number parameters and floating-point operations at inference time for BMDG and all its sub-modules. }
\label{tab:infrence_complex}
\begin{tabular}{|l|c|c|}
\hline
\textbf{Model} & \textbf{\# of Para. (M)} & \textbf{Flops (G)}  \\ \hline \hline

Feature Backbone  &  {24.8}  & {5.2}  \\ 
Prototype Mining  &  {3.1}  & {0.2}  \\ 
Attentive Prototype Embedding  &  {1.8}  & {0.3}  \\ 
\hline
baseline \cite{all-survey}  &  {24.9}   & {5.2}    \\
BMDG   &  {29.7}   & {5.7}    \\
\hline
\end{tabular}

\end{table}

\section{Visual Results}

\subsection{UMAP projections:}
To show the effectiveness of BMDG, we randomly select 7 identities from the SYSU-MM01 dataset and project their feature representations using the UMAP method \cite{mcinnes2018umap} for (a) Baseline, (b) one-step (prototypes without gradual training), and (c) BMDG. Visualization results (Figure \ref{fig:umap}) show that compared with the baseline and one-step approach, the feature representations learned with our BMDG method are well clustered according to their respective identity, showing a strong capacity to discriminate. BMDG is effective for learning robust and identity-aware features. Our BMDG method reduces this distance across modalities for each person and provides more separation among samples from different people. 

\begin{figure*} [!ht]
  \centering
  \begin{subfigure}{0.32\linewidth}
    \includegraphics[width=\linewidth]{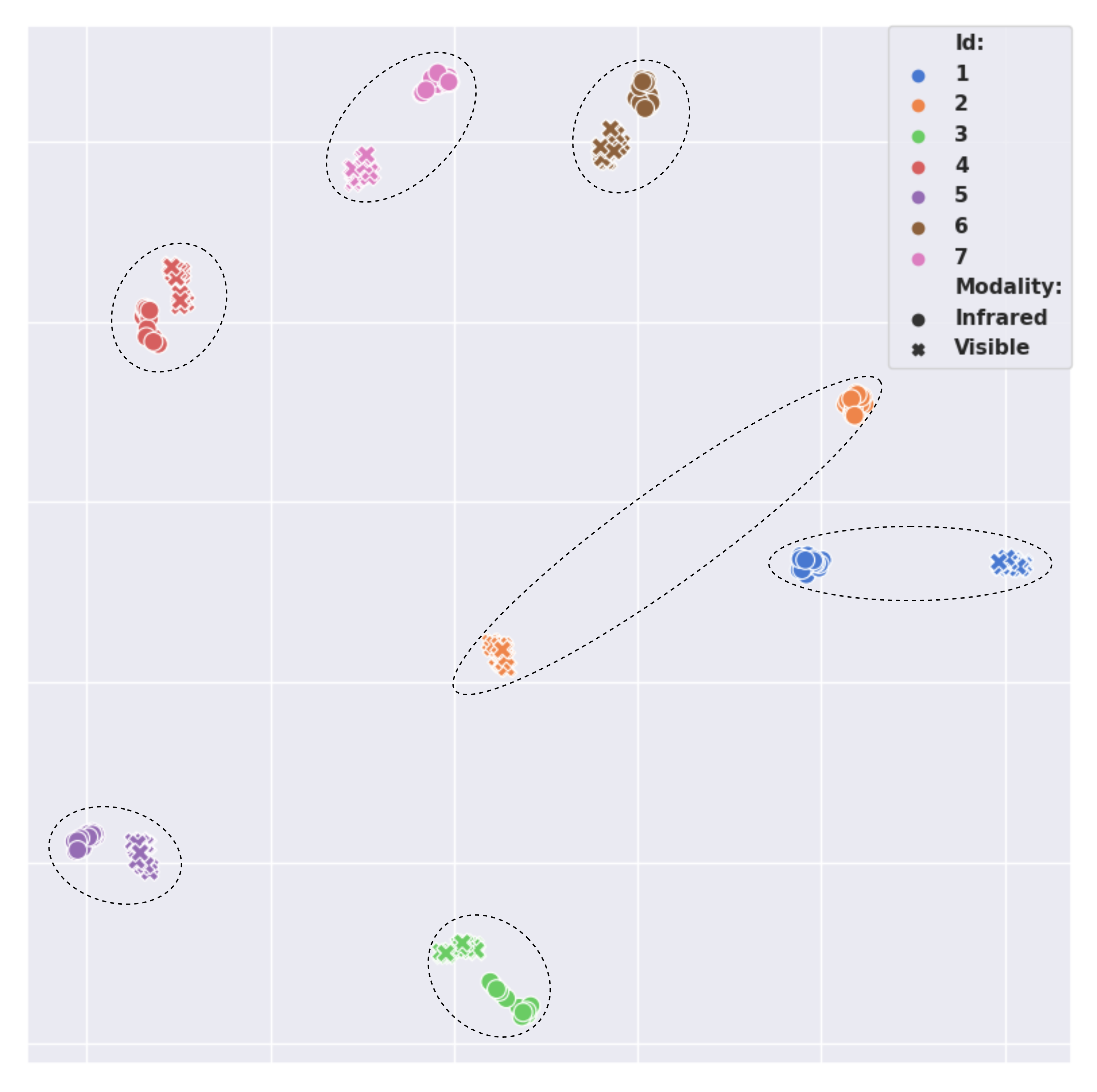}
    \caption{Baseline.}
  \end{subfigure}
  \hfill
  \begin{subfigure}{0.32\linewidth}
    \includegraphics[width=\linewidth]{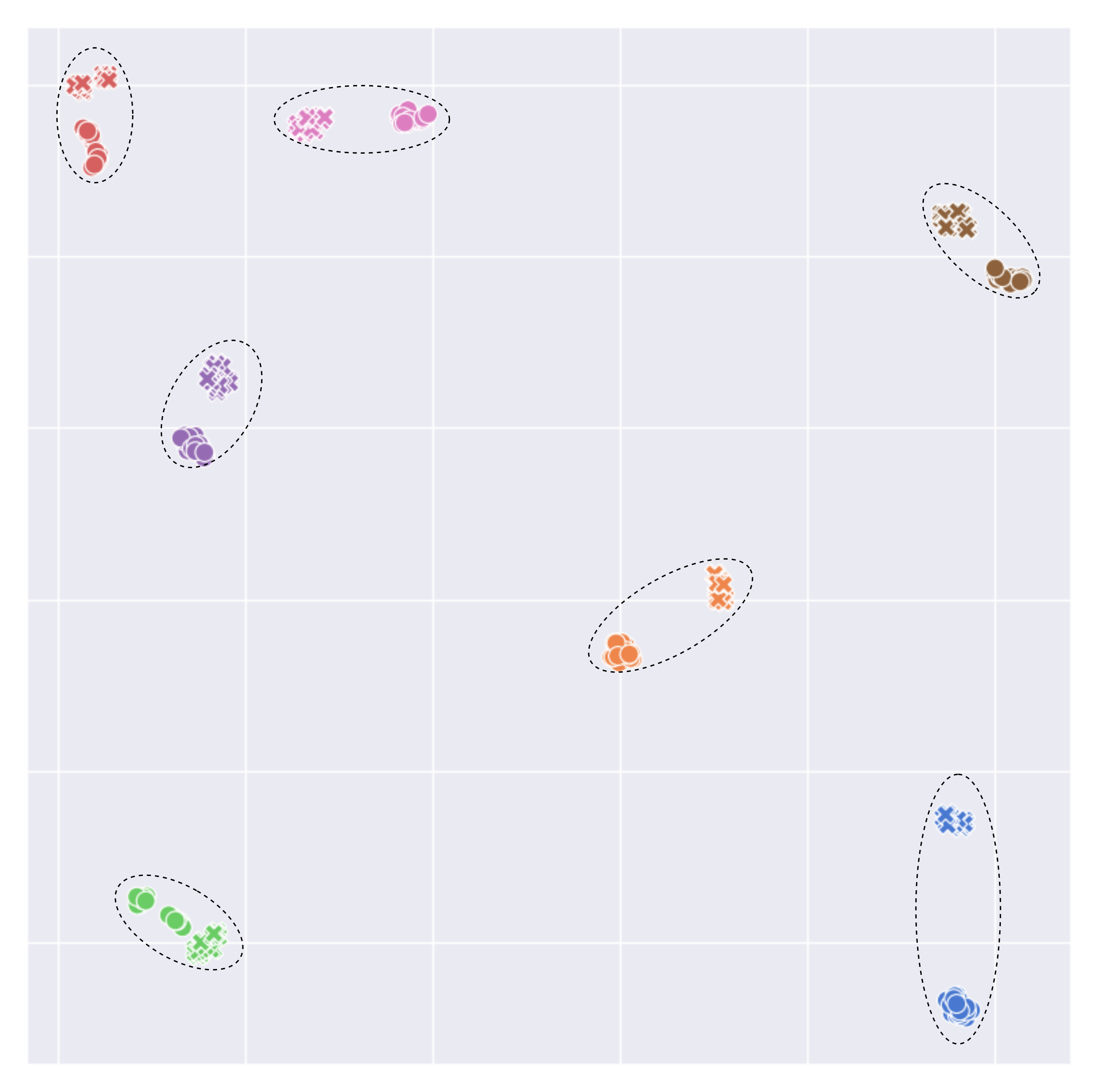}
    \caption{One Step.}
  \end{subfigure}
  \hfill
  \begin{subfigure}{0.32\linewidth}
    \includegraphics[width=\linewidth]{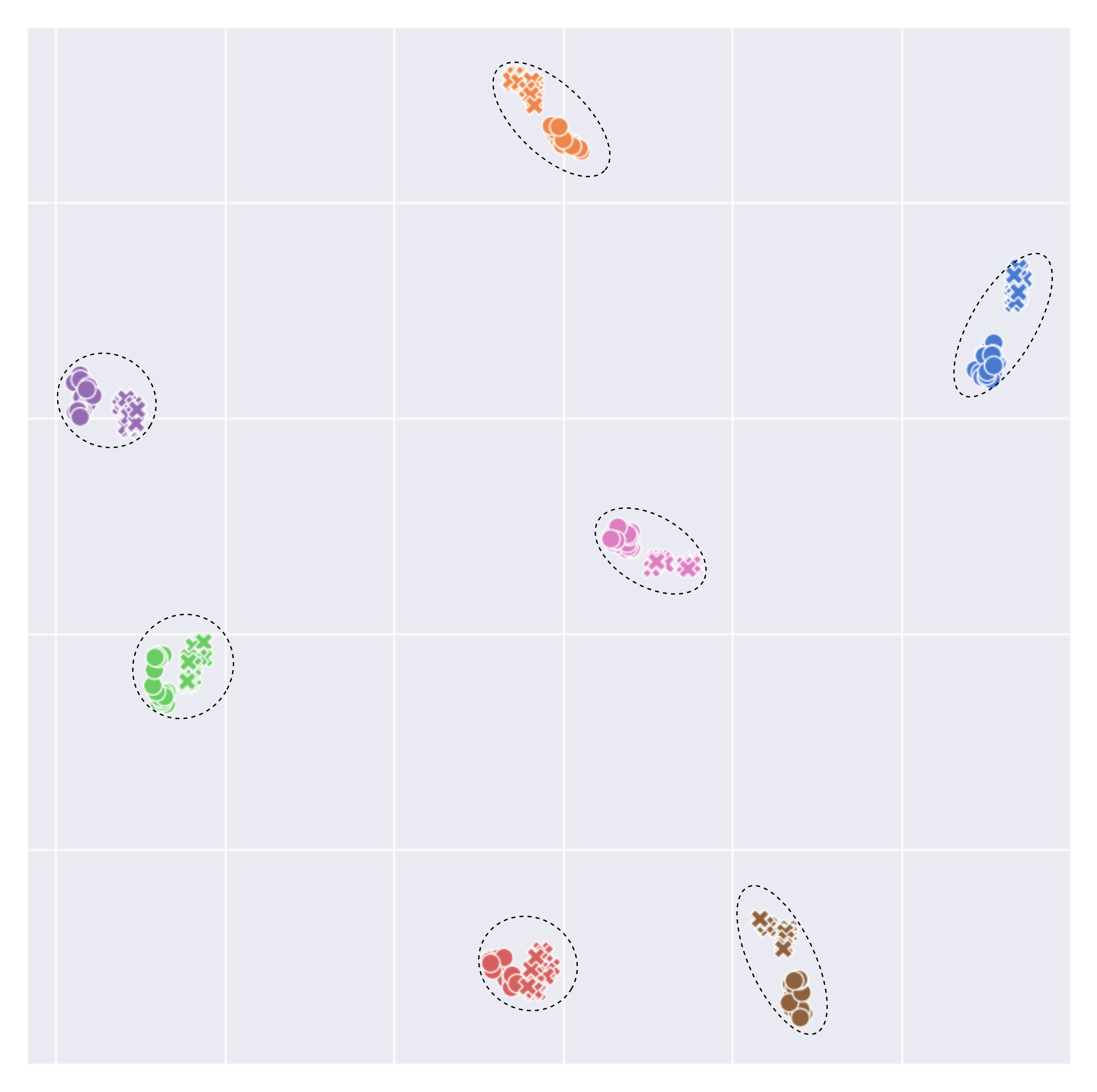}
    \caption{BMDG.}
  \end{subfigure}
  \caption{Distributions of learned V and infrared features of 7 identities from SYSU-MM01 dataset for (a) the baseline, (b) one-step using part-prototypes, and (c) our BMDG method by UMAP \cite{mcinnes2018umap}. Each color shows the identity. }
  \label{fig:umap}
\end{figure*}

Also, to show how the intermediate features gradually mix the modalities, we draw intermediate features for 6 steps in Figure \ref{fig:umap2}. At the beginning of training, the features are based on modality while at step 6, the features are concentrated on each identity. 

\begin{figure*} [!ht]
  \centering
  \begin{subfigure}{0.32\linewidth}
    \includegraphics[width=\linewidth]{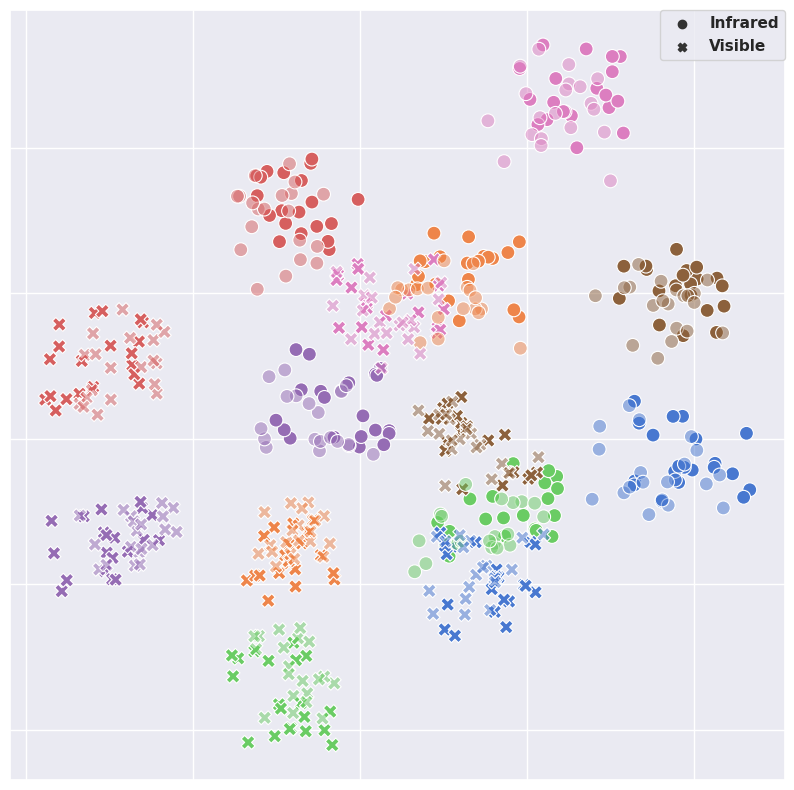}
    \caption{Step 1.}
  \end{subfigure}
  \hfill
  \begin{subfigure}{0.32\linewidth}
    \includegraphics[width=\linewidth]{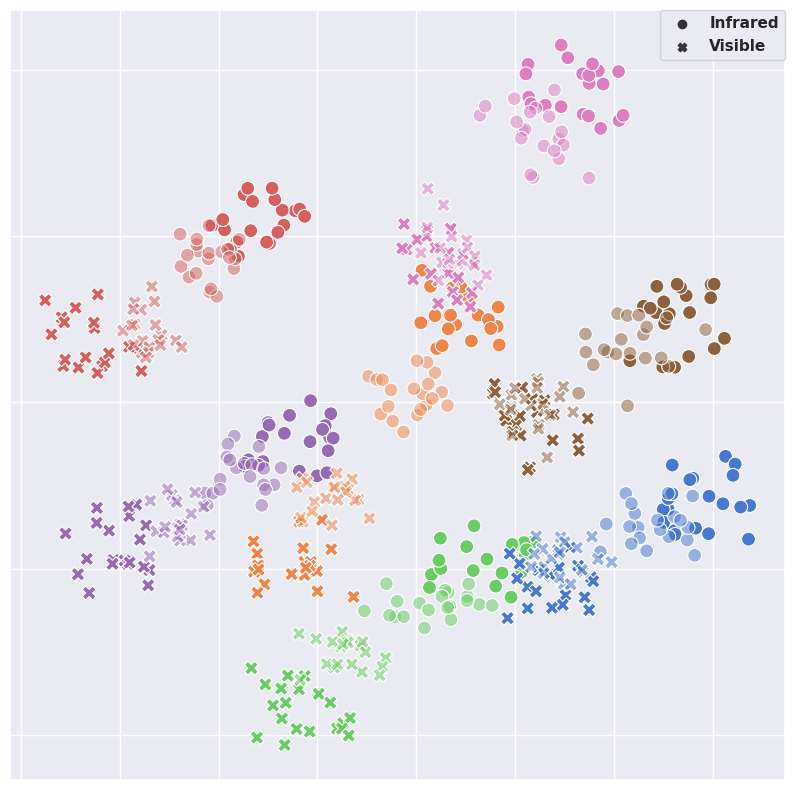}
    \caption{Step 2.}
  \end{subfigure}
  \hfill
  \begin{subfigure}{0.32\linewidth}
    \includegraphics[width=\linewidth]{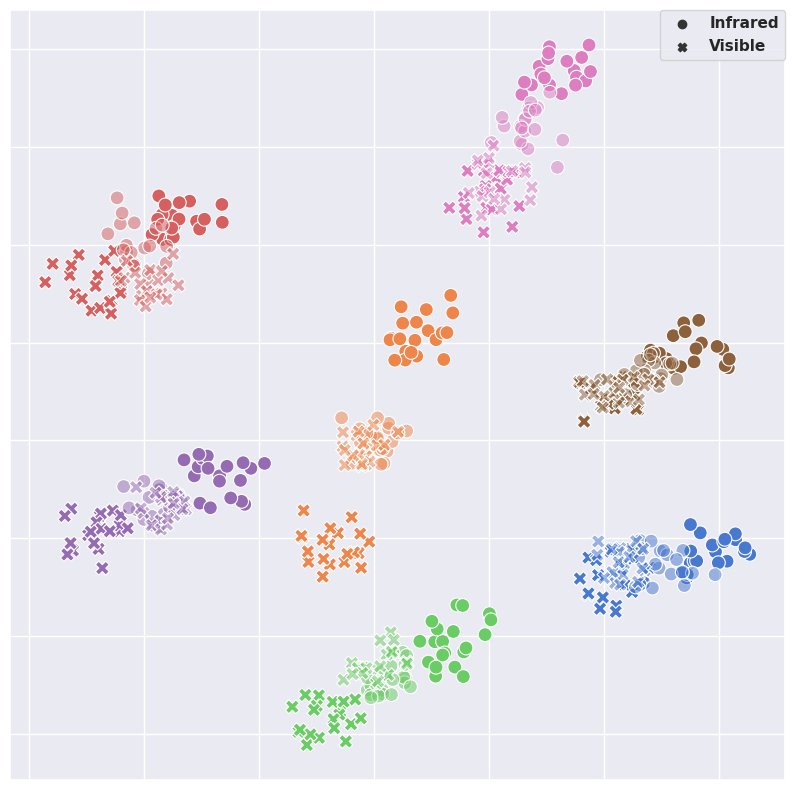}
    \caption{Step 3.}
  \end{subfigure}
  \\
  \begin{subfigure}{0.32\linewidth}
    \includegraphics[width=\linewidth]{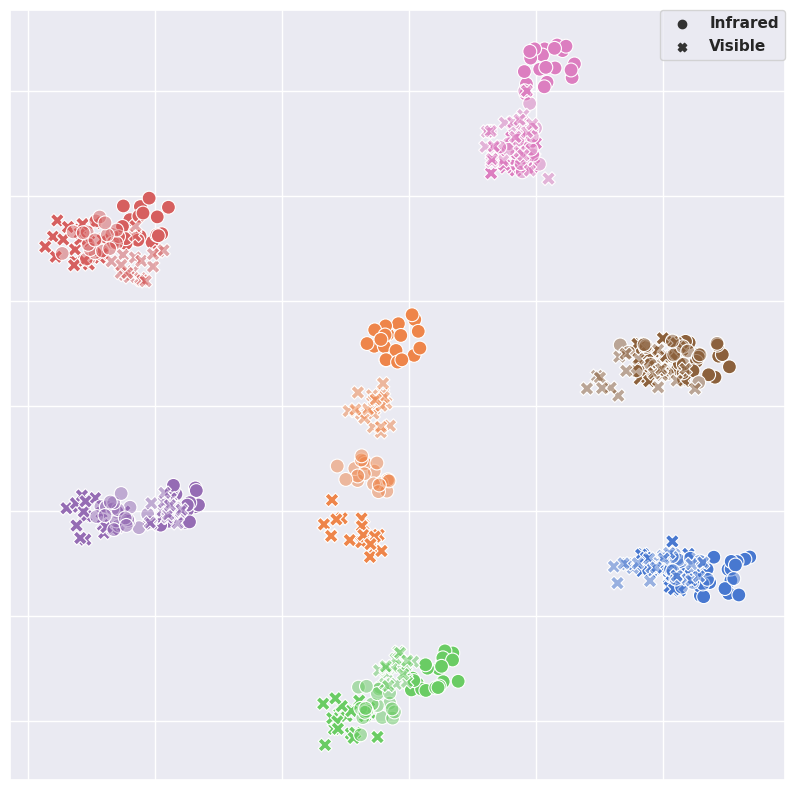}
    \caption{Step 4.}
  \end{subfigure}
  \hfill
  \begin{subfigure}{0.32\linewidth}
    \includegraphics[width=\linewidth]{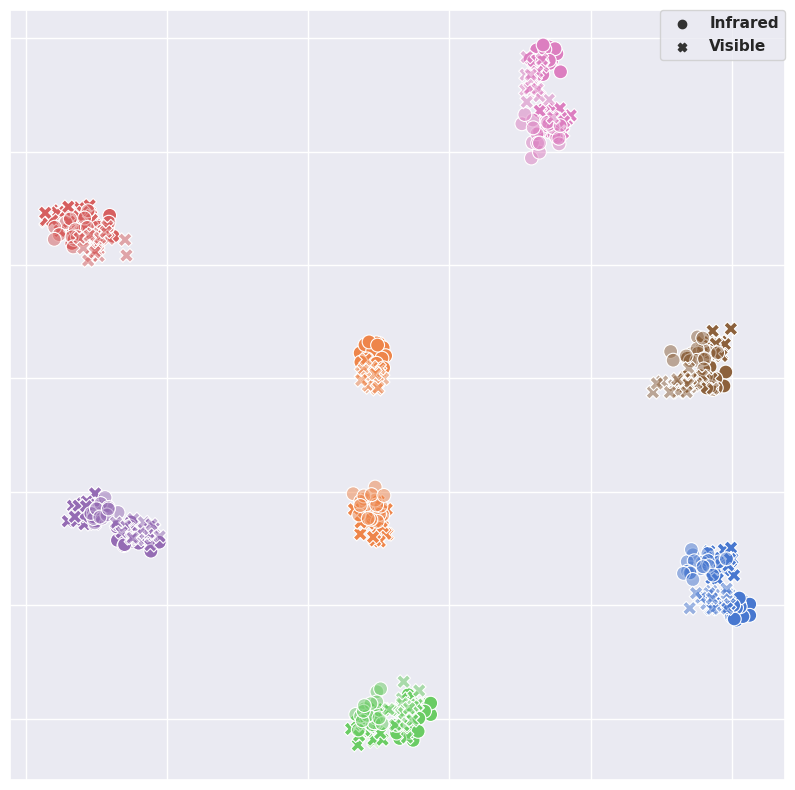}
    \caption{Step 5.}
  \end{subfigure}
  \hfill
  \begin{subfigure}{0.32\linewidth}
    \includegraphics[width=\linewidth]{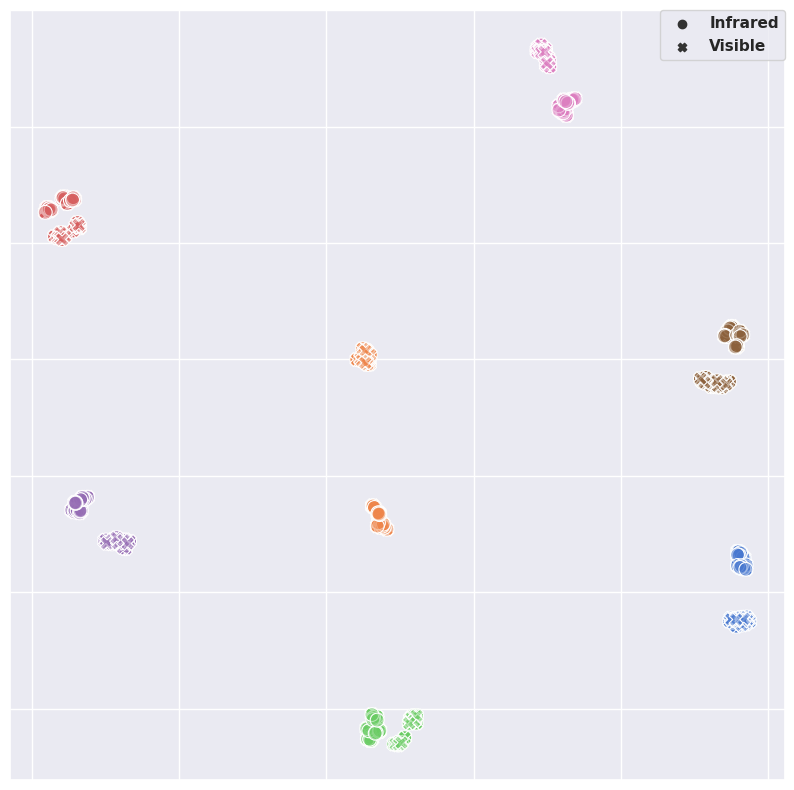}
    \caption{Step 6.}
  \end{subfigure}
  \caption{Distributions of learned V and infrared features of 7 identities from SYSU-MM01 dataset in training for 6 steps at epochs of 10,30,50,70,90 and 160 respectively by UMAP \cite{mcinnes2018umap}. Each color shows the identity. The intermediate features are drawn with lower opacity. }
  \label{fig:umap2}
\end{figure*}

\subsection{Domain shift:}

To estimate the level of domain shift over data from V and I modalities, we measured the MMD distance for each training epoch. To this end, for each epoch, we selected 10 random images from 50 random identities and extracted the prototype and global features, then measured the MMD distance between the centers of those features for each modality as shown in Fig. \ref{fig:mmd}(b). We report the normalized MMD distances between I and V features for our BMDG approach when compared with the baseline. Our method reduces this distance more than the y baseline. Thus, the results show that the intermediate domains improve the model robustness to a large multi-modal domain gap by gradually increasing the mix in prototypes over multiple steps.


\begin{figure*} [!ht]
  \centering
  \begin{subfigure}{0.45\linewidth}
    \includegraphics[width=\linewidth]{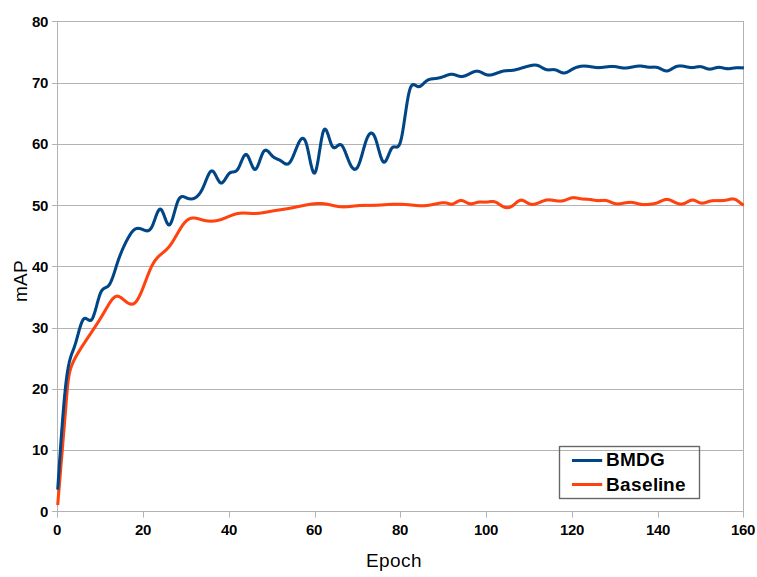}
    \caption{Learning Curve.}
  \end{subfigure}
  \hfill
  \begin{subfigure}{0.45\linewidth}
    \includegraphics[width=\linewidth]{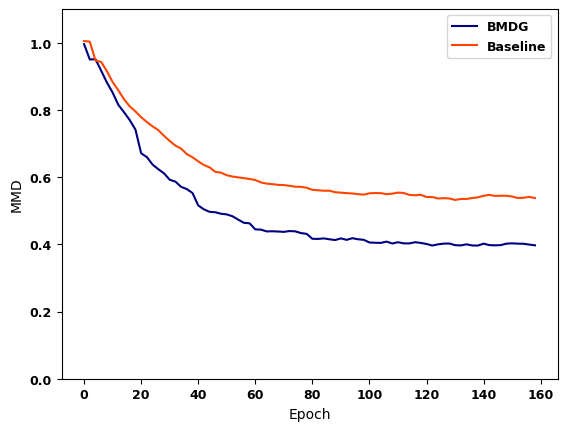}
    \caption{BMDG.}
  \end{subfigure}
  \caption{ The mAP and domain shift (MMD distance) between I and V modalities over training epochs. (a) The learning curve of BMDG vs Baseline\cite{all-survey}. (b) MMD distance over the center of multiple person's infrared features to visible modality. }
  \label{fig:mmd}
\end{figure*}

\subsection{Part-prototype masking:}
To show spatial information related to prototype features, we visualize the score map in the PM module (see Fig. \ref{fig:mask}). Our approach encodes prototype regions linked to similar body parts without considering person identity. Our model tries to find similar regions for each class of prototypes and then extracts ID-related information for that region. Therefore, BMDG is more robust for matching the same part features.
\begin{figure*} [!ht]
  \centering
  \begin{subfigure}{0.45\linewidth}
    \includegraphics[width=\linewidth,height=0.8\linewidth]{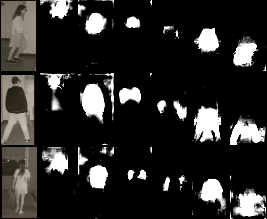}
    \caption{Infrared.}
  \end{subfigure}
  \begin{subfigure}{0.45\linewidth}
    \includegraphics[width=\linewidth]{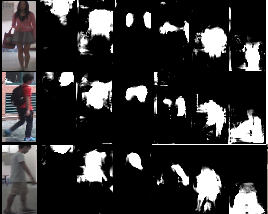}
    \caption{Visible.}
  \end{subfigure}
  \caption{ Prototypes regions extracted by the PRM module for (a) infrared and (b) visible images. Note that the mask size is 18$\times$9, which is then resized to fit the original input image. As shown, the mask of prototypes focuses on similar body parts without accounting for identity.}
  \label{fig:mask}
\end{figure*}

\subsection{Semi-supervised body part detection:}

An additional benefit of our HCL module lies in its ability to detect meaningful parts in a semi-supervised manner. By forcing the model to identify semantic regions that are both informative about foreground objects and contrastive to each other, our hierarchical contrastive learning provides robust part detection, even in the absence of part labels. To assess HCL, we fine-tuned our ReID model as a student using a pre-trained part detector \cite{part_detec} on the PASCAL-Part Dataset \cite{chen2014detect} as the teacher. In Fig. \ref{fig:unsuper_part}, the results show our model's strong capacity for detecting body parts compared to its teacher.
\begin{figure*} [!ht]
  \centering
  \begin{subfigure}{0.35\linewidth}
    \includegraphics[width=\linewidth, height=1.107\linewidth]{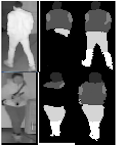}
    \caption{Infrared.}
  \end{subfigure}
  \begin{subfigure}{0.35\linewidth}
    \includegraphics[width=\linewidth,height=1.1\linewidth]{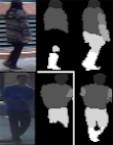}
    \caption{Visible.}
  \end{subfigure}
  \caption{Semi-supervised part discovery on the SYSU-MM01 dataset. The first columns are the (a) infrared and (b) visible images. The second column images are the result from \cite{part_detec}, and the last column are results with our fine-tuned model in BMDG. }
  \label{fig:unsuper_part}
\end{figure*}


\end{appendices}
\end{document}